\newcommand{\bp}{\begin{proof} \small }
\newcommand{\ep}{\end{proof} \normalsize}
\newcommand{\epx}{\end{proof} \small}
\newcommand{\bpa}{\begin{proofappx} \footnotesize }
\newcommand{\epa}{\end{proofappx} \small }
\newtheorem{theorem}{Theorem}
\newtheorem{proposition}{Proposition}
\newtheorem{corollary}{Corollary}
\newtheorem{lemma}{Lemma}
\newtheorem{assumption}{Assumption}
\newtheorem*{theorem*}{Theorem}
\newtheorem*{proposition*}{Proposition}
\newtheorem*{corollary*}{Corollary}
\newtheorem*{lemma*}{Lemma}
\newtheorem*{assumption*}{Assumption}
\newtheorem*{definition*}{Definition}
\newtheorem*{claim*}{Claim}
\newcommand{\be}{\begin{equation}}
\newcommand{\ee}{\end{equation}}
\newcommand{\bs}{\begin{subequations}}
\newcommand{\es}{\end{subequations}}
\newcommand{\bq}{\begin{eqnarray}}
\newcommand{\eq}{\end{eqnarray}}
\newcommand{\bqn}{\begin{eqnarray*}}
\newcommand{\eqn}{\end{eqnarray*}}
\newcommand{\ba}{\left[ \begin{array}}
\newcommand{\ea}{\\ \end{array} \right]}
\newcommand{\ben}{\begin{enumerate}}
\newcommand{\een}{\end{enumerate}}
\def\u{{\boldsymbol{u}}}
\def\v{{\boldsymbol{v}}}
\def\x{{\boldsymbol{x}}}
\def\y{{\boldsymbol{y}}}
\def\real{{\mathchoice%
{\hbox{\rm\setbox1=\hbox{I}\copy1\kern-.45\wd1 R}}
{\hbox{\rm\setbox1=\hbox{I}\copy1\kern-.45\wd1 R}}
{\hbox{\scriptsize\rm\setbox1=\hbox{I}\copy1\kern-.45\wd1 R}}
{\hbox{\scriptsize\rm\setbox1=\hbox{I}\copy1\kern-.45\wd1 R}}}}
\def\Zint{{\mathchoice{\setbox1=\hbox{\sf Z}\copy1\kern-.75\wd1\box1}
{\setbox1=\hbox{\sf Z}\copy1\kern-.75\wd1\box1}
{\setbox1=\hbox{\scriptsize\sf Z}\copy1\kern-.75\wd1\box1}
{\setbox1=\hbox{\scriptsize\sf Z}\copy1\kern-.75\wd1\box1}}}
\newcommand{\complex}{ \hbox{\rm C\kern-0.45em\rule[.07em]{.02em}{.58em}%
\kern 0.43em}}
\begin{document}
%
% paper title
% can use linebreaks \\ within to get better formatting as desired
\title{On the Local Cache Update Rules \\ in Streaming Federated Learning}
%
%
% author names and IEEE memberships
% note positions of commas and nonbreaking spaces ( ~ ) LaTeX will not break
% a structure at a ~ so this keeps an author's name from being broken across
% two lines.
% use \thanks{} to gain access to the first footnote area
% a separate \thanks must be used for each paragraph as LaTeX2e's \thanks
% was not built to handle multiple paragraphs
%

\author{Heqiang Wang, Jieming Bian, Jie Xu% <-this % stops a space
\thanks{Heqiang Wang, Jieming Bian and Jie Xu are with the Department of Electrical and Computer Engineering, University of Miami, Coral Gables, FL 33146, USA. }

% <-this % stops a space
}

\maketitle

\begin{abstract}
In this study, we address the emerging field of Streaming Federated Learning (SFL) and propose local cache update rules to manage dynamic data distributions and limited cache capacity. Traditional federated learning relies on fixed data sets, whereas in SFL, data is streamed, and its distribution changes over time, leading to discrepancies between the local training dataset and long-term distribution. To mitigate this problem, we propose three local cache update rules - First-In-First-Out (FIFO), Static Ratio Selective Replacement (SRSR), and Dynamic Ratio Selective Replacement (DRSR) - that update the local cache of each client while considering the limited cache capacity. Furthermore, we derive a convergence bound for our proposed SFL algorithm as a function of the distribution discrepancy between the long-term data distribution and the client's local training dataset. We then evaluate our proposed algorithm on two datasets: a network traffic classification dataset and an image classification dataset. Our experimental results demonstrate that our proposed local cache update rules significantly reduce the distribution discrepancy and outperform the baseline methods. Our study advances the field of SFL and provides practical cache management solutions in federated learning.
\end{abstract}

% Note that keywords are not normally used for peerreview papers.
%\begin{IEEEkeywords}
%IEEEtran, journal, \LaTeX, paper, template.
%\end{IEEEkeywords}

% For peer review papers, you can put extra information on the cover
% page as needed:
% \ifCLASSOPTIONpeerreview
% \begin{center} \bfseries EDICS Category: 3-BBND \end{center}
% \fi
%
% For peerreview papers, this IEEEtran command inserts a page break and
% creates the second title. It will be ignored for other modes.
\IEEEpeerreviewmaketitle

\section{Introduction}
Federated learning (FL) is a distributed machine learning paradigm that enables a set of clients with decentralized data to collaborate and learn a shared model under the coordination of a centralized server. In FL, data is stored on edge devices in a distributed manner, which reduces the amount of data that needs to be uploaded and decreases the risk of user privacy leakage. While FL has gained popularity in the field of distributed deep learning, most research on FL has been conducted under ideal conditions and has not fully accounted for real-world constraints and features. Given that the client in FL is typically an edge device, we highlight two features that are more aligned with reality. The first feature, called \textbf{Streaming Data}, acknowledges that clients often consist of edge devices that continually receive and record data samples on-the-fly. Therefore, FL must operate on dynamic datasets that are built on incoming streaming data, rather than static ones. The second feature, called \textbf{Limited Storage}, recognizes that edge devices such as network routers and IoT devices have limited storage space allocated for each service and application. As a result, only a restricted amount of space can be reserved for FL training without compromising the quality of other services. This paper aims to address the lack of consideration for these two real-world features in current FL research.

To address the problem presented above, we investigate a new FL problem, called Streaming Federated Learning (SFL), where the local models of clients are trained based on dynamic datasets rather than static ones. SFL involves three different types of data distributions. The first one is the long-term (underlying) label distribution, which pertains to the data distribution of the client following a prolonged period of streaming data reception. This distribution cannot be anticipated during the training process and, due to storage capacity constraints, it is unfeasible to obtain an accurate long-term distribution by recording the whole data stream. The second type is the short-term (empirical) label distribution, which corresponds to the distribution of the client's currently received data. Short-term distributions are noisy and may vary over time, and they may not necessarily approximate the long-term distribution. The discrepancy between the long-term distribution and short-term distributions is illustrated in Fig.~\ref{lsc}. The third type is the cached label distribution, which is the distribution of the dataset currently stored in the client and is governed by the local cache update rule. The aforementioned three distributions suggest that the primary challenge of SFL is the discrepancy between the a priori unknown long-term distribution and the distribution of cached data for training, as the training data is continually gathered from the stream. As a result, a proper local dataset update rule is essential to produce a cached distribution based on the short-term distributions so that it captures the long-term distribution as accurately as possible, thereby enhancing the learning performance. Our main contributions are summarized as follows:
\begin{enumerate}
    \item  We formulate the SFL problem and propose new FL algorithms for SFL. Unlike conventional FL, training in SFL must be conducted on a dynamic dataset based on streaming data rather than a static dataset. The clients in SFL also only have limited storage capacity, making storing all incoming data impossible.
    \item We propose and investigate three different local dataset update rules and theoretically analyze the discrepancy between the resulting cached distributions and the long-term distribution. Based on this discrepancy analysis, we further prove a convergence bound of our proposed SFL algorithm.
    \item We apply SFL to address a practical problem, namely online training of network traffic classifiers. Our experiments, which use both a network traffic classification dataset and the FMNIST dataset, demonstrate that our proposed update rules outperform benchmarks in the SFL framework. 
\end{enumerate}

The rest of this paper is organized as follows. In Section II, we discuss related works on FL and network traffic classification. Section III presents the system model and formulates the SFL problem. In Section IV, we introduce the SFL workflow, propose three local dataset update rules, and analyze discrepancies. Section V presents the convergence analysis of SFL on non-i.i.d. data and a non-convex function. The experimental results of SFL are presented in Section VI. Finally, Section VII concludes the paper. 

\section{Related Work}
In recent years, FL has emerged as a promising framework for decentralized deep learning. Several works, such as \cite{li2020federated, lim2020federated, yang2019federated, wahab2021federated}, have provided a comprehensive introduction to FL and its research problems. Among the various challenges in FL, the convergence analysis of FedAvg and its variants stands out as particularly crucial. Early works focused on FL under the assumptions of i.i.d. datasets and full client participation, as demonstrated in \cite{stich2018local, yu2019parallel, wang2018cooperative}. Most of the theoretical works suggest that convergence occurs linearly given a sufficiently large number of learning rounds. However, this assumption may not always hold in real-world FL scenarios, leading to an increasing number of works \cite{karimireddy2020scaffold, li2019convergence, yang2022anarchic, jhunjhunwala2022fedvarp, yang2021achieving} investigating convergence proofs of FedAvg and its variants under non-i.i.d. datasets. Although the proposed SFL differs from conventional FL, our proof is primarily inspired by \cite{yang2021achieving}, which is based on non-convex functions and non-i.i.d. datasets.

Recently, there has been a growing interest in exploring non-stationary and continually evolving local datasets that are known as concept drift problems \cite{lu2018learning, hoens2012learning}. Some researchers have begun to investigate the FL with concept drift \cite{chen2021asynchronous, canonaco2021adaptive, jothimurugesan2022federated, casado2022concept}. To address the concept drift problem under the FL setting, various approaches have been proposed, such as adjusting learning rates \cite{canonaco2021adaptive}, incorporating regularization terms \cite{chen2021asynchronous, casado2022concept}, or training multiple models separately \cite{jothimurugesan2022federated}. However, although SFL considered in our paper has some connection to the concept drift problem, there is a fundamental difference. In the SFL problem, the global objective function remains constant depending on the constant albeit unknown long-term label distribution. However, in the concept drift problem, the underlying distribution changes, leading to a change in the global objective function. Apart from the concept drift problems, two works \cite{jin2021budget} and \cite{gong2022ode} consider a similar streaming data structure that is more relevant to our proposed SFL. In \cite{jin2021budget}, the authors propose an online approach to control local model updates on streaming data and global model aggregations of FL, with the aim of preventing training load congestion after the preparation of the entire training data and ensuring that model training is spread out with the arrival of streaming data. Meanwhile, our study focuses on exploring the discrepancy between long-term label distribution and cached label distribution that arises from FL with streaming data and limited storage. The authors in \cite{gong2022ode} introduce an online data selection framework for FL with streaming data. They aim to allow the server to exert control in a way that gradually regulates the data distribution of all clients to approach an i.i.d. distribution by facilitating additional information exchange between the server and the clients. However, in our work, we consider a more practical scenario where the clients themselves are responsible for data selection.

In terms of application, we applied SFL to online training of network traffic classifiers. Network traffic classification involves categorizing network traffic data into different types or classes based on certain characteristics of the data. There are two main categories of methods for performing network traffic classification: traditional methods and machine learning-based methods. Traditional methods mainly rely on port \cite{moore2005toward} or payload \cite{finsterbusch2013survey} traffic classification approaches. However, these methods can fail when faced with port translation or encrypted network packets \cite{taylor2017robust}. With the growing popularity of deep learning, some recent approaches have utilized neural networks for traffic classification. For instance, in \cite{liu2019fs}, the authors proposed FS-Net based on recurrent neural networks and autoencoder for traffic classification and packet feature mining. Another approach, described in \cite{zhang2020autonomous}, uses a DL-based autonomous learning framework for traffic classification, which can also handle unknown classes. Nonetheless, these approaches typically rely on centralized deep learning models, which may not be the optimal choice for distributed scenarios involving edge devices such as routers. Some recent works \cite{mun2020internet, peng2021federated} have suggested using the FL approach to address the issue of traffic classification. However, their proposed solutions do not take into account the gradual arrival of data in network traffic problems or the limited storage capacity of edge devices like routers.

\section{Problem Formulation}
\subsection{System Model}
Let us consider a network consisting of one server and $K$ clients. Unlike conventional FL frameworks that use static datasets, every client in the considered system gradually acquires data from its online data source, and each of these online data sources has a long-term (underlying) label distribution. To facilitate exposition, we discrete time into periods (each of which corresponds to a learning round as we will define shortly) and assume that each client $k \in \{1,..., K\}$ receives a set $\mathcal{S}^k_t$ of $B_s$ labeled data samples from its online data source in each period $t$. Each client $k$ has a finite cache $\mathcal{L}^k$ of size $B > B_s$. For analytical simplicity, we assume that $B$ is a multiple of $B_s$ and denote $M = \frac{B}{B_s} \in \mathbb{Z}_+$. Because the client cache is limited, not all labeled data samples can be stored and used for learning at the same time. 

\begin{figure}[htbp]
\centering
\includegraphics[width=5cm]{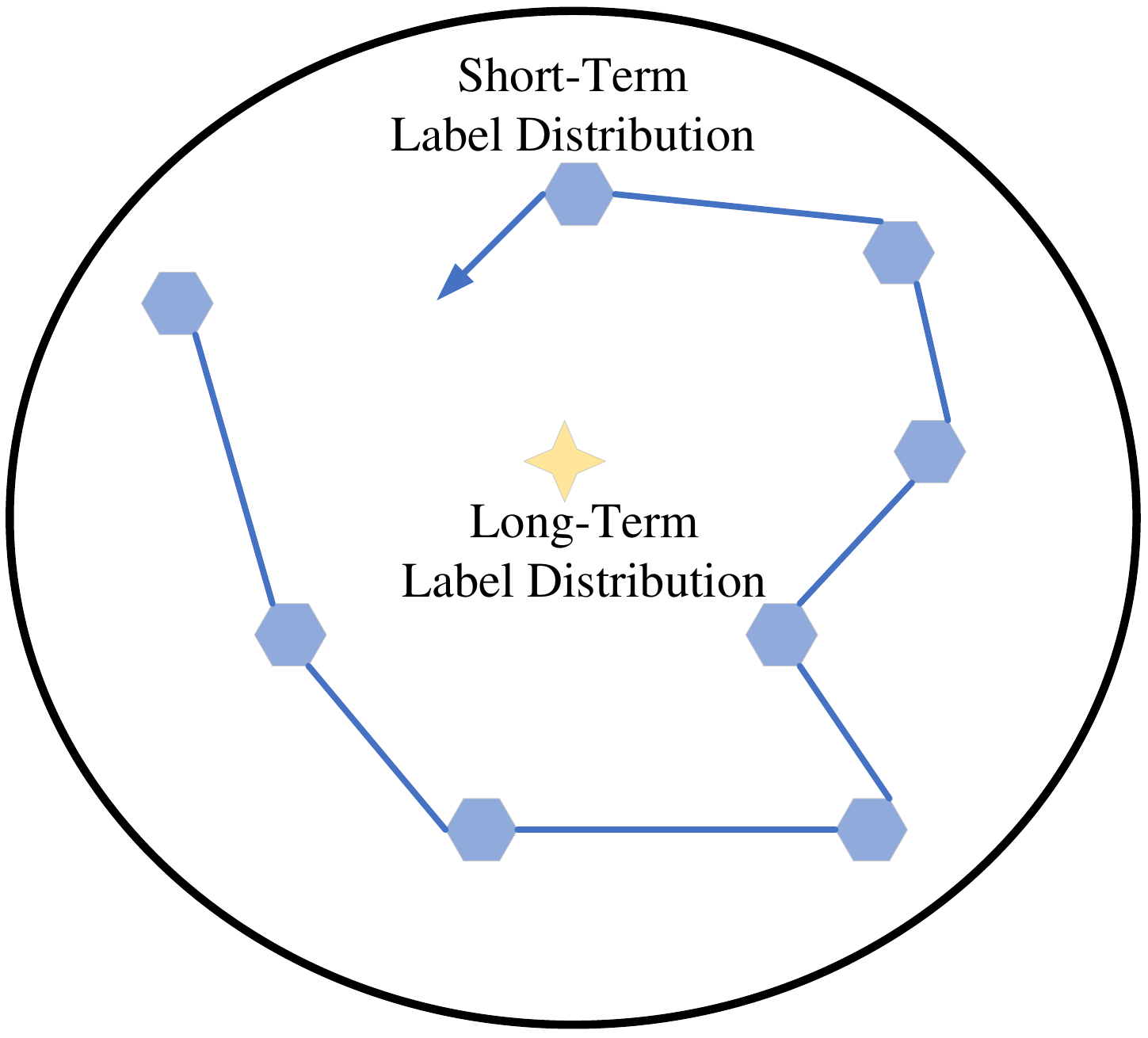}
\caption{\label{lsc} The trajectory of the long-term label distribution and the short-term label distribution. }
\end{figure}

Each client $k$ has a long-term label distribution $\pi^k = [\pi^{k,1}~\pi^{k,2}~...~\pi^{k,R}]$, where $R$ is the total number of label classes and $\pi^{k,r}$ represents the probability that class $r$ appears in client $k$, which is unknown by the client beforehand. However, the short-term (empirical) label distribution can be different from the long-term label distribution and non-stationary over time as shown in Fig.~\ref{lsc}. For example, in network traffic classification, productivity applications may take up a large portion of network traffic in the daytime while entertainment applications are more popular at night. As a result, the application label distribution of $\mathcal{S}^k_t$ in one period is noisy and biased due to not only the finite number of instances but also the non-stationary application usage patterns. Furthermore, the short-term label distribution often does not change abruptly but exhibits temporal correlations. In other words, the application label distributions in the received labeled dataset may be similar in adjacent periods. Let $n^{k,r}_t$ be the number of instances with label $r$ in $\mathcal{S}^k_t$ and we denote $\u^k_t = [u^{k,1}_t~u^{k,2}_t~...~u^{k,R}_t]$ as the short-term label distribution of $\mathcal{S}^k_t$ where $u^{k,r}_t = n^{k,r}_t/B_s$. We make the following assumptions on $\u^k_t$. 
\begin{assumption}[Limited Temporal Correlation]\label{assm1}
There exists an integer $\Gamma > 0$ such that (1) for any $\tau \leq \Gamma$ we have $0 < \max_{k, r, t}\mathbb{E}[(u^{k,r}_t - \pi^{k,r})(u^{k,r}_{t-\tau} - \pi^{k,r})] \leq \delta^2$ for some constant $\delta^2$; (2) for any $\tau > \Gamma$ we have $\mathbb{E}[(u^{k,r}_t - \pi^{k,r})(u^{k,r}_{t-\tau} - \pi^{k,r})] = 0$. 
\end{assumption}
Assumption \ref{assm1} states that the temporal correlation of the label distribution is confined in a neighborhood of $\Gamma$ periods. For analytical simplicity, we assume the same $\delta^2$ for any $\tau \leq \Gamma$ but practically it makes sense that $\delta^2$ is larger for smaller $\tau$ since closer periods exhibit stronger correlation. This generalization is straightforward in our framework. 

Because the client has a finite cache, we also define the cached label distribution at client $k$ in period $t$, denoted by $\v^k_t = [v^{k, 1}_t~v^{k,2}_t~...~v^{k, R}_t]$, as the label distribution of data currently in the cache. The cached label distribution is a joint result of both the short-term distribution and the local cache update rule. 

To better understand these concepts, consider the network traffic classification problem. Each local area network (LAN) $k$ connects to the network via a router/access point $k$, which monitors the application usage in the LAN. These routers act as the client in FL. Suppose there are a total number of $R$ possible applications and network traffic classification aims to identify the application $y \in \{1, ..., R\}$ based on the data packet feature $x$. In our problem, we consider that labeled data packets continuously arrive at the routers depending on the application usage pattern in the LAN for training the DL-based network traffic classifier. The labeled data packets may be manually labeled with delay and the number is kept small relative to the total data traffic in order to reduce the labeling overhead and complexity. It is important to note that the network traffic classifier problem represents only one instance of the broader SFL problem. In utilizing the network traffic classifier problem to illustrate SFL, our aim is simply to aid the reader's comprehension of the problem. 

\subsection{Learning Objective}
Our goal is to train a machine learning model using the limited number of labeled data samples received by the different clients. Without loss of generality, we assume that the data arrival rate to all clients is the same. Therefore, the long-term label distribution of the overall network is simply the average of that of each client, i.e., $\pi = \frac{1}{K} \sum_{k = 1}^K \pi^k$. We define the loss function as $f(w) = \mathbb{E}_{\xi \sim \pi}F(w;\xi)$ where $F(w;\xi)$ is the objective function with data sample/s $\xi$, $\xi$ represents the sample/s drawn from the long-term label distribution, and the loss function can further be decomposed into a weighted sum of local loss functions as follows
\begin{align}
f(w) = \frac{1}{K} \sum_{k=1}^K f^k(w) = \frac{1}{K} \sum_{k=1}^K \mathbb{E}_{\xi \sim \pi^k} F^k(w; \xi)
\end{align}
where $f^k(w) = \mathbb{E}_{\xi \in \pi^k} F^k(w; \xi)$ is the local loss function of client $k$. Thus, training the machine learning model is equivalent to solving for the optimal parameter $w$ that minimizes the loss function, i.e., $\min_w f(w)$. 

Because of the distributed nature of the network, it is impractical to send all the labeled data samples to a central location to train the model. Privacy concerns can also be another reason that forbids clients from directly exchanging data with each other. In this paper, we take the FL approach to train the machine learning model in a distributed manner assisted by a parameter server, where clients train local models based on their local data and periodically exchange the local models with a parameter server to derive the global model. However, compared to conventional FL systems where local models are trained on static local datasets, the online machine learning model must be trained on time-varying dynamic data. As the labeled data samples are received gradually over time at the clients, the clients do not have access to the long-term label distribution at the beginning but must continuously update their finite local cache for the incoming training instances. The cached label distribution in the local cache may diverge from the long-term label distribution because of the short-term non-stationarity, thereby degrading the FL performance. 

In the next sections, we introduce the SFL architecture for online machine learning model training and investigate how different local cache updating rules affect learning performance.

\section{SFL architecture and local cache update rules}
\subsection{SFL Architecture}
In the proposed SFL system, learning is organized into a series of iterative learning rounds. As previously mentioned, one period corresponds to a learning round. Each learning round $t$ comprises the following four steps. 
\begin{enumerate}
    \item \textbf{Global Model Download}. Each client $k$ downloads the current global model $w_t$ from the parameter server. 
    \item \textbf{Local Model Update}. Each client $k$ uses $w_t$ as the initial model to train a new local model $w^k_{t+1}$ based on the current training data samples in its local cache $\mathcal{L}^k$. Because the local cache is finite and usually small, we consider local training performs $E$ steps of full-batch gradient descent (GD). Specifically, the local model is updated as
    \begin{align}
        & w^k_{t, 0} = w_t\\
        & w^k_{t, \tau+1} = w^k_{t, \tau} - \eta_L g^k_{t,\tau}, \forall \tau = 1, ..., E\\
        & w^k_{t+1} = w^k_{t, E}
    \end{align}
    where $g^k_{t, \tau} = \nabla F^k(w^k_{t, \tau}; \mathcal{L}^k_t)$ is the gradient computed on the local dataset currently stored in the local cache $\mathcal{L}^k_t$, and $\eta_L$ is the local learning rate. Note that because of the short-term non-stationarity and finite cache space, $\nabla F^k(w; \mathcal{L}^k_t)  = \mathbb{E}_{\xi \sim \pi^k} F(w; \xi)$ does not hold.
    \item \textbf{Local Model Upload}. Clients then upload their local model updates to the server. Typically, instead of uploading the local model $w^k_{t+1}$, client $k$ may upload only the local model update $\Delta^k_t$, which is defined as the total model difference as follows:
    \begin{align}
        \Delta^k_t = \frac{1}{\eta_L}(w^k_{t, E} - w^k_{t, 0}) = -\sum_{\tau=0}^{E-1}g^k_{t, \tau}
    \end{align}
    \item \textbf{Global Model Update}. The server updates the global model by using the aggregated local model updates from the clients:
    \begin{align}
        w_{t+1} = w_t + \eta\eta_L\Delta_t, \text{where}~\Delta_t = \frac{1}{K}\sum_{k=1}^K\Delta^k_t
    \end{align}
    where $\eta$ is the global learning rate. 
\end{enumerate}

\begin{figure}[htbp]
\centering
\includegraphics[width=9cm]{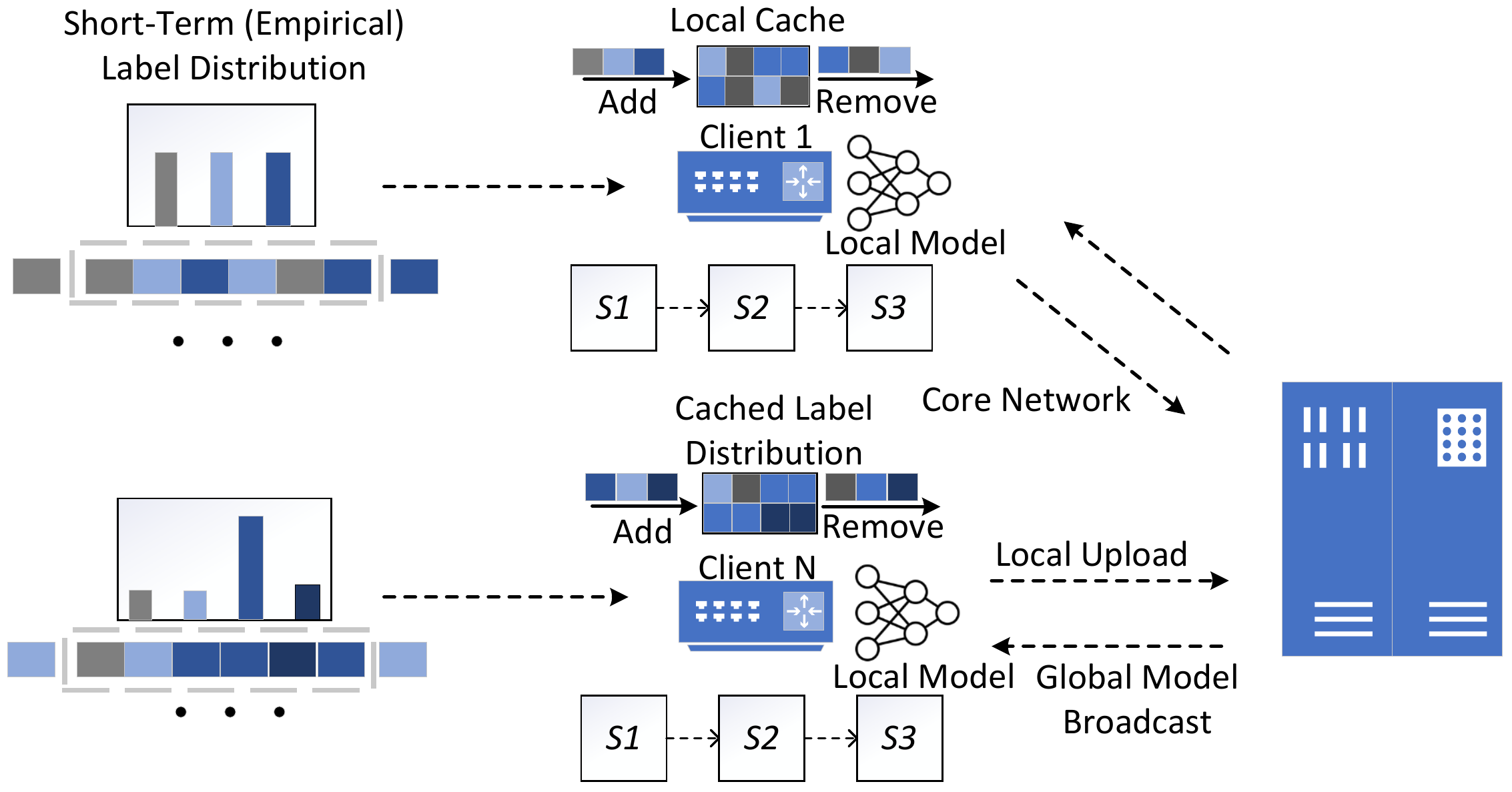}
\caption{\label{sfl} The SFL framework. There are three main stages in SFL: $S_1$ streaming data arrival. $S_2$ local cache updating. $S_3$ local model updating.}
\end{figure}

\subsection{Local Cache Update}
The key difference between conventional FL and SFL is how the local model update is performed, specifically, what data the local model is trained on. In conventional FL, the local model is trained on a static local dataset (using either all data or sampled data) whereas in SFL, the local dataset must be continuously updated as new data is received and old data is removed. Therefore, the local cache update rule will affect what data is used for training local models and consequently the global learning performance. 

We illustrate the streaming data arrival and local cache updating in Fig.~\ref{sfl}. Between two consecutive local model updates, new labeled data is received by the clients. In particular, client $k$ receives a labeled dataset $\mathcal{S}^k_t$ by the local cache update step in round $t$. Then client $k$ updates the local cache $\mathcal{L}^k_t$ using the new data $\mathcal{S}^k_t$ and the existing data in the local cache according to some update rule $\Phi$ as follows
\begin{align}
    \mathcal{L}^k_t \gets \Phi(\mathcal{L}^k_{t-1}, \mathcal{S}^k_t)
\end{align}
The updated local cache is then used for local model training at client $k$. Next, we introduce several local cache update rules. 

\subsubsection{First-In First-Out (FIFO)}
A straightforward local cache update rule is FIFO, which is also used as a baseline for many other caching systems. Specifically, the FIFO update rule uses queuing logic to remove the oldest data  so that a newly received data instance can be added. In our problem, client $k$ simply removes the $B_s$ oldest labeled data instances, denoted by $\mathcal{H}^k_{t-1}$, to make room for the new $B_s$ labeled data instances in $\mathcal{S}^k_t$. Mathematically, 
\begin{align}
    \mathcal{L}^k_t \gets \mathcal{L}^k_{t-1}\backslash \mathcal{H}^k_{t-1} \cup \mathcal{S}^k_t
\end{align}
The cached label distribution changes as a result of the updated local cache as follows:
\begin{align}
    v^{k,r}_t = \frac{n_r(\mathcal{L}^k_{t-1}) - n_r(\mathcal{H}^k_{t-1}) + n_r(\mathcal{S}^k_t)}{B}, \forall r = 1, ..., R
\end{align}
where $n_r(\mathcal{X})$ is the number of data instances with label $r$ in a set $\mathcal{X}$. We characterize the discrepancy between the cached distribution and the long-term distribution below. 

\begin{proposition}\label{prop1}
The discrepancy between the cached label distribution and the long-term label distribution by using FIFO is bounded as follows,
\begin{align}
    \mathbb{E}[(v^{k,r}_t - \pi^{k,r})^2] \leq \frac{1}{M} (\min\{2\Gamma+1, M\})\delta^2
\end{align}
\end{proposition}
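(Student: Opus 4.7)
My plan is to exploit the fact that under FIFO, the cache contents have a completely transparent structure: after $M$ rounds of FIFO updates (and for all $t \ge M$), the cache $\mathcal{L}^k_t$ is exactly the union of the $M$ most recent arrivals $\mathcal{S}^k_{t-M+1}, \ldots, \mathcal{S}^k_t$. This is because each round evicts exactly one ``block'' of $B_s$ samples (the oldest one) and inserts one new block, and $M = B/B_s$ such blocks fit in the cache. So the count of label $r$ in the cache telescopes to $n_r(\mathcal{L}^k_t) = \sum_{s=t-M+1}^{t} n^{k,r}_s$, and dividing by $B = M B_s$ gives the clean representation
\begin{align}
v^{k,r}_t \;=\; \frac{1}{M}\sum_{s=t-M+1}^{t} u^{k,r}_s.
\end{align}

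With this in hand, the deviation from the long-term mean becomes a simple average of the per-period deviations, $v^{k,r}_t - \pi^{k,r} = \frac{1}{M}\sum_{s=t-M+1}^{t}(u^{k,r}_s - \pi^{k,r})$. Squaring and taking expectations produces a double sum:
\begin{align}
\mathbb{E}\bigl[(v^{k,r}_t - \pi^{k,r})^2\bigr] \;=\; \frac{1}{M^2}\sum_{s=t-M+1}^{t}\sum_{s'=t-M+1}^{t}\mathbb{E}\bigl[(u^{k,r}_s - \pi^{k,r})(u^{k,r}_{s'} - \pi^{k,r})\bigr].
\end{align}
I would then apply Assumption \ref{assm1} term by term: pairs with $|s-s'|>\Gamma$ contribute zero, while pairs with $|s-s'|\le\Gamma$ contribute at most $\delta^2$.

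The remaining step is a counting argument for the non-vanishing pairs. For each fixed $s$ in the window of length $M$, the number of valid $s'$ with $|s-s'|\le\Gamma$ inside the same window is at most $2\Gamma+1$, but it is also trivially at most $M$ (the whole window). Summing over the $M$ choices of $s$ yields at most $M\cdot\min\{2\Gamma+1,M\}$ non-zero terms, each bounded by $\delta^2$. Plugging this into the display above gives
\begin{align}
\mathbb{E}\bigl[(v^{k,r}_t - \pi^{k,r})^2\bigr] \;\le\; \frac{1}{M^2}\cdot M\cdot\min\{2\Gamma+1,M\}\cdot\delta^2 \;=\; \frac{1}{M}\min\{2\Gamma+1,M\}\,\delta^2,
\end{align}
which is exactly the claimed bound.

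There is no real technical obstacle here; the only care needed is in (i) verifying the telescoping cache identity for FIFO and (ii) the pair-counting with the correct $\min$ to handle both the ``long-memory'' regime $2\Gamma+1\ge M$ (where essentially all pairs are correlated and the bound degenerates to $\delta^2$) and the ``short-memory'' regime $2\Gamma+1<M$ (where averaging over $M$ samples yields the $\frac{2\Gamma+1}{M}\delta^2$ concentration). Implicit in the argument is that $\mathbb{E}[u^{k,r}_s]=\pi^{k,r}$, which I would state explicitly (it is consistent with Assumption \ref{assm1} being phrased as a centered covariance) and note that the result holds for all $t\ge M$; for $t<M$ the cache is not yet full and a slightly larger averaging window would appear, but the same argument carries through with an even smaller bound.
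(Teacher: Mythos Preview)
Your proposal is correct and follows essentially the same approach as the paper: represent $v^{k,r}_t$ as the average $\frac{1}{M}\sum_{m=1}^{M} u^{k,r}_{t-m+1}$ under FIFO, expand the squared deviation into the double sum of cross-terms, and bound the at most $M\cdot\min\{2\Gamma+1,M\}$ non-vanishing terms by $\delta^2$ via Assumption~\ref{assm1}. Your write-up is in fact more explicit than the paper's on the telescoping cache identity and the pair-counting, and your side remark that $\mathbb{E}[u^{k,r}_s]=\pi^{k,r}$ is not actually needed for the algebra (the expansion holds for any constant $\pi^{k,r}$), so you can drop it.
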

\begin{proof}
The proof can be found in Appendix \ref{appexprop1} of supplementary materials.
\end{proof}
Proposition \ref{prop1} shows that FIFO update rule can reduce the distribution discrepancy in the local cache by a factor at most $\min\{\frac{2\Gamma +1 }{M}, 1\}$ compared to the short-term label distribution depending on how the short-term label distributions are temporally correlated (i.e., $\Gamma$) and the size of the cache (i.e., $M$). In particular, if the short-term label distributions are independent across time (i.e., $\Gamma = 0$), then FIFO is able to reduce distribution discrepancy by $1/M$. Moreover, as the local cache size increases to infinity, the discrepancy diminishes asymptotically, i.e., $\lim_{M \to\infty} \mathbb{E}[(v^{k,r}_t - \pi^{k,r})^2] \to 0$. 

An obvious issue with the FIFO update rule is that the cached distribution $\v^k_t$ can still fluctuate significantly because of the short-term non-stationarity and the finite cache size, especially when the short-term label distribution is strongly temporally correlated and the cache size is small. Next, we propose two new cache update rules tailored to SFL. 

\subsubsection{Static Ratio Selective Replacement (SRSR)}
The reason why FIFO may result in a large fluctuation in the short-term label distribution is that it is unable to use historical data instances and their label distribution information. The goal of SRSR is to smooth out the short-term label distribution and make it approximate the long-term label distribution by using a moving average type of update rule. Specifically, SRSR comprises two steps. 

\textbf{Step 1}. SRSR computes a weighted average of the number of data instances with label $r$ in the local cache and that of the newly received data, i.e.,
\begin{align}
     \tilde{n}_r = (1 - \frac{B_s}{B}\theta) n_r(\mathcal{L}^k_{t-1}) + \theta n_r(\mathcal{S}^k_t) \label{srsr1}
\end{align}
This will be the target number of data instances with the label $r$ in the updated local cache. Here, the scalar $B_s/B$ ensures that the size constraint of the local cache is always satisfied since one can easily verify that for any $\theta \in [0, 1]$ we have
\begin{align}
    \sum_{r=1}^R \tilde{n}_r = (1-\frac{B_s}{B}\theta)B + \theta B_s = B
\end{align}

\textbf{Step 2}. SRSR performs selective replacement to meet the target label numbers while utilizing the new data as much as possible. Specifically, there are two cases depending on the values of $\tilde{n}_r$ and $n_r(\mathcal{S}^k_t)$. 
\begin{enumerate}
    \item Case 1: $\tilde{n}_r \leq  n_r(\mathcal{S}^k_t)$. In this case, SRSR removes all data with label $r$ in $\mathcal{L}^k_{t-1}$ and uniformly randomly selects $\tilde{n}_r$ data instances with label $r$ from $\mathcal{S}^k_t$ to insert into the local cache. 
    \item Case 2: $\tilde{n}_r > n_r(\mathcal{S}^k_t)$. In this case, SRSR uniformly randomly removes $n_r(\mathcal{L}^k_{t-1}) + n_r(\mathcal{S}^k_t) - \tilde{n}_r$ existing data instances with label $r$ from the local cache and inserts all data instances with label $r$ from $\mathcal{S}^k_t$ into the local cache. 
\end{enumerate}
Since the target label numbers are met, the cached label distribution by using SRSR is thus
\begin{align}
    v^{k,r}_t &= \frac{1}{B}\left((1-\frac{B_s}{B}\theta)n_r(\mathcal{L}^k_{t-1}) + \theta n_r(\mathcal{S}^k_t)\right) \notag\\
    &= \frac{\theta}{B}\sum_{\tau = 0}^t (1-\frac{B_s}{B}\theta)^\tau n_r(\mathcal{S}^k_{t-\tau})
\end{align}
The above equation shows that the cached label distribution takes all historical data distribution into account but discounts old information at a rate $1-\frac{B_s}{B}\theta$. 

\begin{proposition}\label{prop2}
    The discrepancy between the cached label distribution and the long-term label distribution by using SRSR is bounded as follows
    \begin{align}
         &\mathbb{E}[(v^{k,r}_t - \pi^{k,r})^2] \leq 2 (1-\frac{\theta}{M})^{2t} (\pi^{k,r})^2 \notag\\
         & + 2(1-(1-\frac{\theta}{M})^{2t})\frac{(1-\frac{\theta}{M})^{-\Gamma} - (1-\frac{\theta}{M})^{\Gamma+1}}{2 - \frac{\theta}{M}}\delta^2
    \end{align}
\end{proposition}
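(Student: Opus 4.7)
The plan is to exploit the recursive form of the SRSR update to write $v^{k,r}_t$ as an exponentially weighted moving average of the short-term distributions $u^{k,r}_\tau$, then expand the square and use the limited temporal correlation assumption. Writing $\alpha = 1 - \frac{\theta}{M}$ and $\beta = \frac{\theta}{M}$, the recursion $v^{k,r}_t = \alpha v^{k,r}_{t-1} + \beta u^{k,r}_t$ unrolls to $v^{k,r}_t = \beta\sum_{\tau=0}^{t-1}\alpha^\tau u^{k,r}_{t-\tau}$ (using the initialization $v^{k,r}_0 = 0$, which accounts for the leading $(\pi^{k,r})^2$ contribution). Since $\beta\sum_{\tau=0}^{t-1}\alpha^\tau = 1 - \alpha^t$, I can write
\begin{equation}
v^{k,r}_t - \pi^{k,r} = \beta\sum_{\tau=0}^{t-1}\alpha^\tau (u^{k,r}_{t-\tau} - \pi^{k,r}) - \alpha^t \pi^{k,r}.
\end{equation}

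Applying $(a+b)^2 \leq 2a^2 + 2b^2$ immediately produces the $2\alpha^{2t}(\pi^{k,r})^2$ term from the deterministic bias piece, and leaves the stochastic term $2\mathbb{E}\big[\big(\beta\sum_{\tau=0}^{t-1}\alpha^\tau(u^{k,r}_{t-\tau}-\pi^{k,r})\big)^2\big]$ to be bounded. I would then expand this square into the double sum $\beta^2\sum_{\tau_1,\tau_2}\alpha^{\tau_1+\tau_2}\mathbb{E}[(u^{k,r}_{t-\tau_1}-\pi^{k,r})(u^{k,r}_{t-\tau_2}-\pi^{k,r})]$, and invoke Assumption~\ref{assm1}: the cross moments vanish whenever $|\tau_1 - \tau_2| > \Gamma$, and are bounded by $\delta^2$ otherwise.

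The remaining task is purely algebraic: bound $\sum_{\tau_1=0}^{t-1}\sum_{\tau_2:|\tau_1-\tau_2|\leq\Gamma}\alpha^{\tau_1+\tau_2}$ by a closed-form geometric expression. For each fixed $\tau_1$, I would upper-bound the inner sum by extending its range to the full window $[\tau_1-\Gamma,\tau_1+\Gamma]$ (valid since $\alpha < 1$ makes the extended terms positive), yielding $\alpha^{\tau_1}\cdot\alpha^{\tau_1-\Gamma}(1-\alpha^{2\Gamma+1})/(1-\alpha)$. Summing over $\tau_1$ then gives a factor $(1-\alpha^{2t})/(1-\alpha^2)$, and using $1-\alpha^2 = \beta(2-\beta)$ cancels the $\beta^2$ prefactor to leave exactly $\delta^2(\alpha^{-\Gamma}-\alpha^{\Gamma+1})(1-\alpha^{2t})/(2-\beta)$, matching the stated bound after substituting back $\alpha = 1-\theta/M$ and doubling.

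The main obstacle I anticipate is the careful handling of the correlation window near the boundary $\tau_1 < \Gamma$, where extending the inner sum introduces negative exponents $\alpha^{\tau_1-\Gamma}$ greater than one; making sure this extension gives a valid \emph{upper} bound (rather than an equality) and that the resulting closed form lines up with the factor $(1-\alpha/M)^{-\Gamma}$ in the statement is the delicate step. Everything else is essentially geometric-series bookkeeping once the exponential-moving-average representation is in place.
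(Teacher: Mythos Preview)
Your proposal is correct and mirrors the paper's proof essentially step for step: the paper also unrolls the SRSR recursion into an exponentially weighted sum, splits off the deterministic $-\alpha^t\pi^{k,r}$ bias via the inequality $(a+b)^2\leq 2a^2+2b^2$, expands the stochastic double sum using Assumption~\ref{assm1}, and then extends each inner correlation window to the full $[\tau-\Gamma,\tau+\Gamma]$ range before summing the resulting geometric series. Your anticipated obstacle at the boundary is a non-issue for exactly the reason you hint at---extending the window only adds positive terms (powers of $\alpha$), regardless of whether the exponent goes negative---so the upper bound holds and the $\alpha^{-\Gamma}$ factor falls out directly.
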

\begin{proof}
 The proof can be found in Appendix \ref{appexprop2} of supplementary materials.
\end{proof}
\begin{corollary}\label{coro11}
By choosing $\theta$ sufficiently small, the bound on $\mathbb{E}[(v^{k,r}_t - \pi^{k,r})^2]$ decreases over $t$. Moreover, 
\begin{align}
\lim_{t\to \infty}\mathbb{E}[(v^{k,r}_t - \pi^{k,r})^2] \leq 2\frac{(1-\frac{\theta}{M})^{-\Gamma} - (1-\frac{\theta}{M})^{\Gamma+1}}{2 - \frac{\theta}{M}}\delta^2
\end{align}
\end{corollary}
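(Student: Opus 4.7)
The plan is to treat this corollary as a direct consequence of Proposition~\ref{prop2} by analyzing the $t$-dependence of the two terms in the bound. I would begin by introducing the shorthand $\alpha := 1 - \theta/M$, so that for any $\theta \in (0, M)$ we have $\alpha \in (0,1)$. With this notation, the bound in Proposition~\ref{prop2} takes the form $A\alpha^{2t} + (1-\alpha^{2t})B$ where
\[
A := 2(\pi^{k,r})^2, \qquad B := \frac{2\delta^2 (\alpha^{-\Gamma} - \alpha^{\Gamma+1})}{2 - \theta/M}.
\]
Rearranging as $B + (A-B)\alpha^{2t}$ reduces the whole corollary to understanding the sign of $A - B$ and the limit of $\alpha^{2t}$.

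For the monotonicity claim, I would observe that as $\theta \to 0^{+}$, $\alpha \to 1^{-}$, so the numerator $\alpha^{-\Gamma} - \alpha^{\Gamma+1} \to 0$ while the denominator $2 - \theta/M \to 2$, hence $B \to 0$. Since $A = 2(\pi^{k,r})^2$ does not depend on $\theta$ and is strictly positive whenever class $r$ has nonzero long-term probability, there exists a threshold $\theta_0 > 0$ such that $A > B$ for all $\theta \in (0, \theta_0)$. For such $\theta$, $(A - B) > 0$ and $\alpha^{2t}$ is strictly decreasing in $t$, so the whole expression $B + (A-B)\alpha^{2t}$ is strictly decreasing in $t$. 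This gives the first assertion of the corollary.

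For the limit, I would simply use $\alpha \in (0,1)$ to conclude $\alpha^{2t} \to 0$ as $t \to \infty$, so
\[
\lim_{t\to\infty} \mathbb{E}[(v^{k,r}_t - \pi^{k,r})^2] \le \lim_{t\to\infty} \bigl(B + (A-B)\alpha^{2t}\bigr) = B,
\]
which is exactly the stated bound. The only genuine subtlety, and essentially the only place that requires care, is making the phrase ``$\theta$ sufficiently small'' precise: one needs $\theta < M$ to ensure $\alpha \in (0,1)$, and additionally $\theta < \theta_0$ to guarantee $A > B$ so that the bound is actually decreasing rather than increasing in $t$. The edge case $\pi^{k,r} = 0$ is benign, since then $A = 0$ and the bound collapses to $(1-\alpha^{2t})B \le B$, which is still consistent with the stated asymptotic inequality. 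Beyond this bookkeeping there is no real obstacle; all of the heavy lifting is already done in Proposition~\ref{prop2}.
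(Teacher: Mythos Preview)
Your proposal is correct and follows essentially the same approach as the paper: both rewrite the Proposition~\ref{prop2} bound as a convex combination (or equivalently as $B + (A-B)\alpha^{2t}$), argue that $B < A$ for small enough $\theta$, and then let $\alpha^{2t} \to 0$. The only cosmetic difference is that the paper argues $B$ is monotone increasing in $\theta/M$ to conclude it can be made smaller than $(\pi^{k,r})^2$, whereas you take the limit $\theta \to 0^{+}$ directly to get $B \to 0$; both routes yield the same threshold-type conclusion, and your explicit handling of the edge case $\pi^{k,r}=0$ is a small bonus.
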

\begin{proof}
The proof can be found in Appendix \ref{appexcoro11} of supplementary materials.
\end{proof}
Proposition \ref{prop2} and Corollary \ref{coro11} imply that by choosing a small $\theta$ and with a large cache size $M$, SRSR can achieve a small label distribution discrepancy after sufficiently many rounds. On the other hand, the convergence to that small discrepancy is slower with a smaller $\theta$. Moreover, even in the limit $t \to \infty$, the discrepancy bound does not vanish unless $M \to \infty$, i.e., the local cache has an infinity capacity. 

\subsubsection{Dynamic Ratio Selective Replacement (DRSR)}
Now, we propose the DRSR update rule that overcomes the drawbacks of FIFO and SRSR. The goal of DRSR is to maintain the cached label distribution in the cache as the time-average short-term label distribution up to the current period. To this end, DRSR first uses a dynamic weight to compute the target numbers of data instances with different labels following a formula similar to Eq.\eqref{srsr1} in SRSR, i.e.,
\begin{align}
     \tilde{n}_r = (1 - \frac{B_s}{B}\theta_t) n_r(\mathcal{L}^k_{t-1}) + \theta_t n_r(\mathcal{S}^k_t) \label{drsr1}
\end{align}
where $\theta_1, ..., \theta_t$ is the sequence of dynamic weights. Once $\tilde{n}_r, \forall r$ is computed, DRSR follows the exact same Step 2 as in SRSR to perform the selective replacement. 

\begin{proposition}\label{prop3}
    The discrepancy between the cached label distribution and the long-term label distribution by using DRSR with $\theta_t = \frac{B}{B_s t}$ is bounded as follows
    \begin{align}
         \mathbb{E}[(v^{k,r}_t - \pi^{k,r})^2] \leq \frac{2\Gamma + 1}{t}\delta^2
    \end{align}
\end{proposition}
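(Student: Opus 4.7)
The plan is to show that the DRSR recursion with $\theta_t = B/(B_s t)$ collapses to a plain running-average update for the cached label distribution, after which the bound follows from a direct second-moment calculation using Assumption~\ref{assm1}.

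First, I would observe that with the stated choice of $\theta_t$ the coefficient $B_s\theta_t/B$ in \eqref{drsr1} simplifies to $1/t$, so the target counts become
\begin{align*}
\tilde n_r = \left(1-\tfrac{1}{t}\right) n_r(\mathcal{L}^k_{t-1}) + \tfrac{M}{t}\, n_r(\mathcal{S}^k_t).
\end{align*}
Dividing through by $B$ and using $n_r(\mathcal{S}^k_t)/B_s = u^{k,r}_t$ together with $n_r(\mathcal{L}^k_{t-1})/B = v^{k,r}_{t-1}$ gives the one-step recursion
\begin{align*}
v^{k,r}_t = \left(1-\tfrac{1}{t}\right) v^{k,r}_{t-1} + \tfrac{1}{t}\, u^{k,r}_t,
\end{align*}
which, unrolled from $v^{k,r}_1 = u^{k,r}_1$, yields the closed form $v^{k,r}_t = \tfrac{1}{t}\sum_{\tau=1}^t u^{k,r}_\tau$. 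This is exactly the time-average of the short-term distributions that the paper says DRSR was designed to maintain.

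Second, since $\pi^{k,r}$ is deterministic I would write $v^{k,r}_t - \pi^{k,r} = \tfrac{1}{t}\sum_{\tau=1}^t (u^{k,r}_\tau - \pi^{k,r})$, square, and take expectations to obtain the double sum
\begin{align*}
\mathbb{E}\bigl[(v^{k,r}_t - \pi^{k,r})^2\bigr] = \tfrac{1}{t^2}\sum_{\tau=1}^t \sum_{s=1}^t \mathbb{E}\bigl[(u^{k,r}_\tau - \pi^{k,r})(u^{k,r}_s - \pi^{k,r})\bigr].
\end{align*}
Invoking Assumption~\ref{assm1}, the summands with $|\tau - s| > \Gamma$ vanish while the others are bounded by $\delta^2$. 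For each of the $t$ values of $\tau$ there are at most $2\Gamma+1$ values of $s$ with $|\tau - s| \le \Gamma$, so the number of nonzero summands is at most $t(2\Gamma+1)$, and dividing by $t^2$ gives the advertised bound $(2\Gamma+1)\delta^2 / t$.

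The main technical subtlety, and the place where I expect the most care is required, is justifying that the one-step recursion for $v^{k,r}_t$ really does hold at every round. This requires the Step~2 selective replacement of DRSR to be feasible, i.e.\ $0 \le \tilde n_r \le n_r(\mathcal{L}^k_{t-1}) + n_r(\mathcal{S}^k_t)$ for every label $r$. For $t \ge M$ the weights $1-1/t$ and $M/t$ both lie in $[0,1]$ and feasibility can be verified directly; the early transient $t < M$ is handled by treating the cache as still being filled (so that by construction $v^{k,r}_t$ is the partial empirical average of everything received so far, matching the closed form). Once this reduction to a running average is in place, the remainder of the argument is essentially the standard variance bound for a time-average of a weakly correlated sequence.
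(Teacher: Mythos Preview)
Your proposal is correct and follows essentially the same route as the paper: collapse the DRSR recursion with $\theta_t = B/(B_s t)$ to the running average $v^{k,r}_t = \tfrac{1}{t}\sum_{\tau=1}^t u^{k,r}_\tau$, then bound the second moment of the centered average using Assumption~\ref{assm1} and the count of at most $t(2\Gamma+1)$ surviving cross terms. The paper's proof omits the feasibility discussion of Step~2 that you raise, so your treatment is if anything slightly more careful.
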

\begin{proof}
 The proof can be found in Appendix \ref{appexprop3} of supplementary materials.
\end{proof}
Proposition \ref{prop3} shows that the discrepancy decreases over time, and the cached label distribution converges to the long-term label distribution at a rate of $O(1/t)$. 

\section{Convergence Analysis}
In this section, we analyze the convergence of SFL. Because of the mismatch between the long-term label distribution $\boldsymbol{\pi}^k$ and the cached label distribution $\v^k_t$ of the local cache $\mathcal{L}^k_t$, the gradient $g^k_{t, \tau} = \nabla F^k(w^k_{t,\tau}; \mathcal{L}^k_t)$ computed in the local model update steps differs from the desired gradient on the long-term label distribution, i.e. $\nabla f^k(w^k_{t, \tau})$. Thanks to the full batch gradient descent, we are able to characterize the difference between $g^k_{t, \tau}$ and $\nabla f^k(w^k_{t,\tau})$ through an intermediate variable, which we name the \textit{virtual local gradient} and denote as $\hat{g}^k_{t, \tau}$. Specifically, $\hat{g}^k_{t, \tau}$ is defined as follows:
\begin{align}
    \hat{g}^k_{t, \tau} = \sum_{r=1}^R \pi^{k, r} \nabla F^k_r(w^k_{t,\tau}; \mathcal{L}^{k,r}_t)
\end{align}
where $\nabla F^k_r(w^k_{t,\tau}; \mathcal{L}^{k,r}_t)$ is the gradient computed on only the subset of data instances with label $r$, denoted by $\mathcal{L}^{k,r}_t$, in the current local cache $\mathcal{L}^k_t$. We note that $\hat{g}^k_{t, \tau}$ is only imaginary since neither it is actually computed nor it can be realistically computed. This is because our algorithm does \textit{not} actually divide $\mathcal{L}^k_t$ into $R$ subsets $\mathcal{L}^{k,1}, ..., \mathcal{L}^{k,R}$ and compute the gradients on each of these sets. Instead, only a single local gradient $\nabla F^k(w^k_{t, \tau}; \mathcal{L}^k_t)$ is computed. More critically, even with $\nabla F^k_r(w^k_{t,\tau}; \mathcal{L}^{k,r}_t), \forall r = 1, ..., R$, computing $\hat{g}^k_{t, \tau}$ requires the knowledge of the long-term label distribution $\boldsymbol{\pi}^k$, which is unknown by the algorithm. 

Before we move on to establish the connection between $g^k_{t, \tau}$ and $\nabla f^k(w^k_{t, \tau})$ and prove the convergence of the proposed SFL algorithm under different local cache update rules, we make the following standard assumptions. 

\begin{assumption}[Lipschitz Smoothness]
The local objective function is Lipschitz smooth, i.e., $\exists L > 0$, such that $\|\nabla f^k(x) - \nabla f^k(y)\| \leq L\|x - y\|$, $\forall x, y \in \mathbb{R}^d$ and $\forall k$.
\label{assm:smoothness}
\end{assumption}

\begin{assumption}[Unbiased Gradient Estimator]
For each client $k$, the label-wise local gradient is unbiased, i.e., $\mathbb{E}_{\mathcal{L}^{k,r}} \nabla F^k(x; \mathcal{L}^{k,r}) = \nabla f^{k,r}(x) \triangleq \mathbb{E}_{\xi^{r}} \nabla F(x; \xi^{r})$ where $\zeta^{r}$ is a instance with label $r$.
\label{assm:unbiased-local}
\end{assumption}

\begin{assumption}[Bounded Dissimilarity]
There exists constants $\sigma_G > 0$ and $A \geq 0$ so that
\begin{align}
    \|f^k(x)\|^2 \leq (A^2 + 1)\|f(x)\|^2 + \sigma^2_G, \forall x, \forall k
\end{align}
when the local loss functions are identical, $A^2=0$ and $\sigma^2_G = 0$.
\label{assm:variability}
\end{assumption}

\begin{assumption}[Gradient Bound]
The label-wise local gradient is bounded, 
\begin{align}
    \mathbb{E}\left[\|\nabla F^k(x; \mathcal{L}^{k,r})\|^2\right] \leq \sigma^2_M, \forall k, \forall r, \forall \mathcal{L}^{k,r}
\end{align}
\label{assm:second-moment}
\end{assumption}
Similar assumptions are commonly used in both the non-convex optimization and FL literature \cite{yang2021achieving, yang2022anarchic,jhunjhunwala2022fedvarp, wang2022friends}. We adapted some of the assumptions for the label-wise local gradient. 

In the previous section, we established the upper bound on the cached label distribution and the long-term label distribution for different local update rules. To facilitate the exposition, we introduce a unified notation $\lambda_t$ to represent the upper bounds. Specifically, 
\begin{align}
    \mathbb{E}[(v^{k,r}_t - \pi^{k,r})^2] \leq \lambda^2_t
\end{align}
The specific forms of $\lambda_t$ can be found in Propositions \ref{prop1}, \ref{prop2} and \ref{prop3} for FIFO, SRSR and DRSR, respectively. 

We begin by introducing some necessary lemmas to help us with the theorem that follows.
\begin{lemma}
The expectations of the difference between the \textit{real local gradient} $g_{t, \tau}^k$ and \textit{virtual local gradient} $\hat{g}_{t, \tau}^k$ is upper bounded as:
\begin{align}
 \mathbb{E} [| g_{t, \tau}^k - \hat{g}_{t, \tau}^k |^2] \leq R^2 \lambda^2_t \sigma_M^2
\end{align}
The difference between the \textit{virtual local gradient}  $\hat{g}_{t, \tau}^k$ and expected gradient $\nabla f^{k}(w^k_{t,\tau})$ is upper bounded as:
\begin{align}
 \mathbb{E} [| \hat{g}_{t, \tau}^k - \nabla f^{k}(w^k_{t,\tau})|^2] \leq  2 R^2 \overline{\pi}^2 \sigma_M^2
\end{align}
where $\overline{\pi} = \max_{k,r} \pi^{k, r}$. 
\label{lemma1}
\end{lemma}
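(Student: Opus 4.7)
The plan is to handle both inequalities by decomposing each squared norm as a weighted sum over the $R$ label classes, then applying a Cauchy--Schwarz / Jensen-type inequality together with the bounded-moment Assumption \ref{assm:second-moment} and (for the first part) the cache-discrepancy bound $\lambda_t^2$ established in Propositions \ref{prop1}--\ref{prop3}.

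For the first bound, I would start by observing that the full-batch cache gradient can be rewritten by grouping samples by label class,
\begin{equation*}
g_{t,\tau}^k = \nabla F^k(w_{t,\tau}^k;\mathcal{L}_t^k) = \sum_{r=1}^R v_t^{k,r}\, \nabla F_r^k(w_{t,\tau}^k;\mathcal{L}_t^{k,r}),
\end{equation*}
because partitioning the cache expresses its empirical mean as a $v_t^{k,r}$-weighted mixture of the per-label empirical means. Since $\hat g_{t,\tau}^k$ has the same label-wise gradients but uses weights $\pi^{k,r}$, the difference reduces to $\sum_{r=1}^R (v_t^{k,r} - \pi^{k,r})\, \nabla F_r^k$. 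Applying the elementary inequality $\|\sum_{r=1}^R a_r\|^2 \le R\sum_{r=1}^R \|a_r\|^2$ turns the squared norm into a sum of $(v_t^{k,r}-\pi^{k,r})^2\|\nabla F_r^k\|^2$ terms. I would then take expectations by first conditioning on the cache realization (which fixes both the weights $v_t^{k,r}$ and the subsets $\mathcal{L}_t^{k,r}$) to apply the pointwise form of Assumption \ref{assm:second-moment}, giving $\sigma_M^2$ per term, and only afterwards take the outer expectation so that $\mathbb{E}[(v_t^{k,r}-\pi^{k,r})^2] \le \lambda_t^2$ can be invoked. Summing $r=1,\ldots,R$ delivers the claimed $R \cdot R \cdot \lambda_t^2 \sigma_M^2$.

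For the second bound, the key identity is $\nabla f^k(w_{t,\tau}^k) = \sum_{r=1}^R \pi^{k,r}\, \nabla f^{k,r}(w_{t,\tau}^k)$, which follows from the mixture form of the long-term distribution $\pi^k$ and the unbiased-gradient Assumption \ref{assm:unbiased-local}. This lets me write
\begin{equation*}
\hat g_{t,\tau}^k - \nabla f^k(w_{t,\tau}^k) = \sum_{r=1}^R \pi^{k,r}\bigl(\nabla F_r^k(w_{t,\tau}^k;\mathcal{L}_t^{k,r}) - \nabla f^{k,r}(w_{t,\tau}^k)\bigr).
\end{equation*}
I would then separate the stochastic and deterministic parts via $\|a-b\|^2 \le 2\|a\|^2 + 2\|b\|^2$ and apply the Cauchy--Schwarz form $\|\sum_r \pi^{k,r} X_r\|^2 \le R\sum_r (\pi^{k,r})^2 \|X_r\|^2 \le R\overline{\pi}^2 \sum_r \|X_r\|^2$ to each piece. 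The bound $\mathbb{E}\|\nabla F_r^k\|^2 \le \sigma_M^2$ comes directly from Assumption \ref{assm:second-moment}, while $\|\nabla f^{k,r}\|^2 = \|\mathbb{E}\nabla F_r^k\|^2 \le \mathbb{E}\|\nabla F_r^k\|^2 \le \sigma_M^2$ follows from Jensen's inequality, so summing over $r$ produces the claimed constant $2R^2 \overline{\pi}^2 \sigma_M^2$.

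The main obstacle I expect is the entangled randomness in Part 1: the cache composition simultaneously determines the random coefficients $v_t^{k,r}$ and the specific subsets $\mathcal{L}_t^{k,r}$ on which the gradients are evaluated, so a careless factorization would give a product of two random quantities that cannot be bounded by $\lambda_t^2 \sigma_M^2$ directly. The cleanest workaround is to read Assumption \ref{assm:second-moment} in its pointwise ``$\forall \mathcal{L}^{k,r}$'' form, condition on the cache to bound the inner conditional second moment by $\sigma_M^2$ uniformly, and only then take the outer expectation over the cache realization so that the discrepancy bound from Propositions \ref{prop1}--\ref{prop3} can be applied to $\mathbb{E}[(v_t^{k,r}-\pi^{k,r})^2]$ independently of the gradient factor.
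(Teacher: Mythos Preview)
Your proposal is correct and follows essentially the same route as the paper: decompose each difference as a sum over the $R$ label classes, apply $\bigl\|\sum_{r=1}^R a_r\bigr\|^2 \le R\sum_{r}\|a_r\|^2$, and then invoke Assumption~\ref{assm:second-moment} together with the discrepancy bound $\lambda_t^2$ (for the first inequality) or the factor $\overline{\pi}$ and Jensen (for the second). Your conditioning argument in Part~1 is in fact more careful than the paper's presentation, which simply factors $\mathbb{E}\bigl[(v^{k,r}_t-\pi^{k,r})^2\|\nabla F_r^k\|^2\bigr]$ into a product of expectations without comment; reading Assumption~\ref{assm:second-moment} as a pointwise ``$\forall\,\mathcal{L}^{k,r}$'' bound, as you do, is exactly what makes that factorization legitimate.
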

\begin{proof}
The proof can be found in Appendix \ref{appexlemma1} of supplementary materials.
\end{proof}

The following result is on the upper bound for the $\tau$-step SGD in the full participation case with Lemma \ref{lemma1}.
\begin{lemma}\label{lemma2}
For any step-size satisfying $\eta_L \leq \frac{1}{8LE}$, we have: $\forall \tau = 0, ..., E-1$
\begin{align}
    &\mathbb{E}[\|w^k_{t, \tau} - w_t\|^2]  \leq 5 E \eta_L^2 R^2 \lambda^2_t \sigma_M^2  +  60E^2\eta_L^2 R^2 \overline{\pi}^2 \sigma_M^2 \notag\\
    &+ 30E^2 \eta_L^2 \sigma^2_G + 30 E^2 \eta_L^2 (A^2 +1) \|\nabla f(x) \|^2
\end{align}
\end{lemma}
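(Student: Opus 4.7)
The plan is to establish a one-step recursion for $c_\tau := \mathbb{E}[\|w^k_{t,\tau}-w_t\|^2]$ starting from the update rule $w^k_{t,\tau+1} = w^k_{t,\tau} - \eta_L g^k_{t,\tau}$. Expanding the squared norm and applying the Young-type inequality $\|x+y\|^2 \leq (1+\alpha)\|x\|^2 + (1+1/\alpha)\|y\|^2$ with $\alpha = \tfrac{1}{2E-1}$ produces
\[
c_{\tau+1} \leq \Bigl(1+\tfrac{1}{2E-1}\Bigr)\, c_\tau \,+\, 2E\eta_L^2\,\mathbb{E}[\|g^k_{t,\tau}\|^2].
\]
The specific choice $\alpha=\tfrac{1}{2E-1}$ is tuned so that after unrolling $E$ times the geometric factor $(1+\tfrac{1}{2E-1})^{E}$ stays bounded by a small universal constant (essentially $\sqrt{e}$), which is what ultimately keeps the inhomogeneous terms at the $O(E\eta_L^2)$ and $O(E^2\eta_L^2)$ scales that appear in the claim.

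The next step is to control $\mathbb{E}[\|g^k_{t,\tau}\|^2]$ via the three-way decomposition
\[
g^k_{t,\tau} = \bigl(g^k_{t,\tau}-\hat g^k_{t,\tau}\bigr) + \bigl(\hat g^k_{t,\tau}-\nabla f^k(w^k_{t,\tau})\bigr) + \nabla f^k(w^k_{t,\tau}),
\]
squaring via $\|a+b+c\|^2 \leq 3(\|a\|^2+\|b\|^2+\|c\|^2)$. Lemma~\ref{lemma1} bounds the first two pieces in expectation by $3R^2\lambda_t^2\sigma_M^2$ and $6R^2\overline{\pi}^2\sigma_M^2$ respectively. For the third piece I would insert and subtract $\nabla f^k(w_t)$, use Assumption~\ref{assm:smoothness} to obtain $\|\nabla f^k(w^k_{t,\tau})\|^2 \leq 2L^2 c_\tau + 2\|\nabla f^k(w_t)\|^2$, and then apply Assumption~\ref{assm:variability} (interpreting the assumption as a bound on $\|\nabla f^k\|^2$) to replace $\|\nabla f^k(w_t)\|^2$ by $(A^2+1)\|\nabla f(w_t)\|^2 + \sigma_G^2$.

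Substituting back converts the recursion into the inhomogeneous linear form $c_{\tau+1} \leq \rho\, c_\tau + D$, where $\rho = 1 + \tfrac{1}{2E-1} + 12 E\eta_L^2 L^2$ and $D$ collects the $R^2\lambda_t^2\sigma_M^2$, $R^2\overline{\pi}^2\sigma_M^2$, $(A^2+1)\|\nabla f(w_t)\|^2$ and $\sigma_G^2$ terms, each premultiplied by $E\eta_L^2$ times an $O(1)$ numerical constant. The stepsize hypothesis $\eta_L \leq \tfrac{1}{8LE}$ forces $12 E\eta_L^2 L^2 \leq \tfrac{3}{16E}$, so $\rho \leq 1+\tfrac{2}{2E-1}$ and $\rho^\tau \leq O(1)$ uniformly in $\tau \leq E-1$. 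Iterating from $c_0 = 0$ then yields $c_\tau \leq \tfrac{\rho^\tau - 1}{\rho - 1}\, D = O(E)\cdot D$, and matching the numerical constants produces the stated coefficients $5E\eta_L^2$, $60 E^2\eta_L^2$, $30 E^2\eta_L^2$, $30 E^2\eta_L^2$.

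The main obstacle is pure bookkeeping: matching the exact constants requires careful tracking of the Young factor, the Jensen factor of $3$, the smoothness-induced factor of $2$, and the geometric sum $(\rho^\tau-1)/(\rho-1)$. Conceptually, the only non-standard ingredient is the interposition of the \emph{virtual} local gradient $\hat g^k_{t,\tau}$ in the decomposition; thanks to Lemma~\ref{lemma1} this cleanly separates the cache-distribution mismatch error (captured by $\lambda_t$) from the residual sampling error (captured by $\overline{\pi}\sigma_M$), after which the argument reduces to a standard client-drift computation in the spirit of \cite{yang2021achieving}.
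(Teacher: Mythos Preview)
Your proposal is correct and follows essentially the same route as the paper: set up a one-step recursion via Young's inequality with $\alpha=\tfrac{1}{2E-1}$, decompose the local gradient through the virtual gradient $\hat g^k_{t,\tau}$ using Lemma~\ref{lemma1} together with smoothness (Assumption~\ref{assm:smoothness}) and bounded dissimilarity (Assumption~\ref{assm:variability}), absorb the resulting drift term $O(E\eta_L^2L^2)\,c_\tau$ into the geometric factor via $\eta_L\le\tfrac{1}{8LE}$, and unroll from $c_0=0$. The only difference is that the paper performs a direct four-way split of $g^k_{t,\tau}$ and isolates the $(g-\hat g)$ contribution with prefactor $\eta_L^2$ rather than your $2E\cdot 3\,\eta_L^2$, which is how it reaches the sharper $5E\eta_L^2$ coefficient on the $R^2\lambda_t^2\sigma_M^2$ term; your three-way Jensen split would naturally produce an $O(E^2\eta_L^2)$ coefficient there, so the claim of ``matching the numerical constants'' on that particular term is slightly optimistic, though the argument structure and the downstream rates in Corollary~\ref{corollary1} are unaffected.
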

\begin{proof}
The proof can be found in Appendix \ref{appexlemma2} of supplementary materials.
\end{proof}
By defining $\Delta_t = \bar{\Delta}_t + e_t$, where $\Delta_t = -\frac{1}{K}\sum_{k=1}^K\sum_{\tau = 0}^{E-1}g_{t,\tau}^k$ and  $\bar{\Delta}_t = -\frac{1}{K}\sum_{k=1}^K\sum_{\tau = 0}^{E-1}\hat{g}_{t, \tau}^k$ we can obtain the convergence bound of SFL with full client participation as follows: 
\begin{theorem}\label{thm1}
Let constant local and global learning rates $\eta_L$ and $\eta$ be chosen as such that $ \eta_L \leq \min\left (  \frac{1}{\sqrt{60 (A^2+1) }EL},  \frac{1}{8LE} \right )$ and $\eta\eta_L \leq \frac{1}{4EL}$. Under Assumption \eqref{assm:smoothness}-\eqref{assm:second-moment} with full client participation, the sequence of model $w_t$ in real sequence satisfies
\begin{align}
    \min_{t = 0,..., T-1}\mathbb{E}\|\nabla f(w_t)\|^2 \leq \frac{f_0 - f_*}{c\eta\eta_L E T} + \Phi_G + \Phi_M + \Phi_L
\end{align}
where $c$ is a constant, $f_0 \triangleq f(w_0)$, $f_* \triangleq f(w_*)$, $w_*$ is the optimal model and
\begin{align}
    &\Phi_G = \frac{30 E^2 \eta_L^2 L^2}{c}\sigma_G^2\\
    &\Phi_M = \frac{60 \eta^2_L E^2 L^2 R^2 \overline{\pi}^2 }{c}\sigma^2_M \\
    &\Phi_L = \frac{\left (5 \eta^2_L E L^2 + 3 \eta \eta_L L  E + 1\right ) R^2 \sigma_M^2}{c T} \sum_{t=0}^{T-1}\lambda^2_t
\end{align}
\end{theorem}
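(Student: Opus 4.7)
The plan is to follow the standard non-convex FedAvg style descent analysis, but with the bias/variance decomposition tailored to the streaming setting through the virtual gradient $\hat{g}^k_{t,\tau}$. I would begin from the $L$-smoothness descent inequality
\begin{align*}
f(w_{t+1}) \leq f(w_t) + \langle \nabla f(w_t), w_{t+1} - w_t\rangle + \tfrac{L}{2}\|w_{t+1} - w_t\|^2,
\end{align*}
substitute $w_{t+1} - w_t = \eta\eta_L \Delta_t$, and split $\Delta_t = \bar{\Delta}_t + e_t$ with $e_t = -\frac{1}{K}\sum_k\sum_\tau (g^k_{t,\tau} - \hat{g}^k_{t,\tau})$. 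Taking expectations, the cross term becomes $\eta\eta_L\langle \nabla f(w_t), \bar{\Delta}_t\rangle + \eta\eta_L\langle \nabla f(w_t), e_t\rangle$, which is the point where the three distinct error sources enter.

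Next, I would handle the virtual part $\langle \nabla f(w_t),\bar{\Delta}_t\rangle$ via the standard ``add-and-subtract'' trick: write $\hat{g}^k_{t,\tau} = \hat{g}^k_{t,\tau} - \nabla f^k(w^k_{t,\tau}) + \nabla f^k(w^k_{t,\tau}) - \nabla f^k(w_t) + \nabla f^k(w_t)$. The first bracket is controlled by the second bound in Lemma \ref{lemma1} (producing the $\overline{\pi}^2\sigma_M^2$ piece that becomes $\Phi_M$); the second bracket uses $L$-smoothness together with the local drift bound from Lemma \ref{lemma2} (producing the $\sigma_G^2$ piece that becomes $\Phi_G$, along with terms proportional to $(A^2+1)\|\nabla f(w_t)\|^2$ coming from the bounded dissimilarity assumption); the third bracket aggregates to $-E\|\nabla f(w_t)\|^2$, which is the negative term needed to extract a descent on the gradient norm. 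For the error cross term $\langle \nabla f(w_t), e_t\rangle$ I would use $2\langle a,b\rangle \leq \alpha\|a\|^2 + \tfrac{1}{\alpha}\|b\|^2$ with a carefully chosen $\alpha$, bounding $\mathbb{E}\|e_t\|^2 \leq E \sum_\tau \frac{1}{K}\sum_k \mathbb{E}\|g^k_{t,\tau}-\hat{g}^k_{t,\tau}\|^2 \leq E^2 R^2\lambda_t^2\sigma_M^2$ by the first bound of Lemma \ref{lemma1}; this is what ultimately feeds the $\lambda_t^2$-dependent term in $\Phi_L$.

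For the quadratic term $\tfrac{L}{2}(\eta\eta_L)^2 \mathbb{E}\|\Delta_t\|^2$, I would again split $\Delta_t = \bar{\Delta}_t + e_t$, bound $\|\bar{\Delta}_t\|^2$ via Lemma \ref{lemma2} plus standard Jensen steps, and bound $\|e_t\|^2$ as above. Choosing the learning rates to satisfy $\eta_L \leq \min(\tfrac{1}{\sqrt{60(A^2+1)}EL}, \tfrac{1}{8LE})$ and $\eta\eta_L \leq \tfrac{1}{4EL}$ is precisely what allows the coefficient of $\|\nabla f(w_t)\|^2$ on the right-hand side to be absorbed into $-cE\eta\eta_L\|\nabla f(w_t)\|^2$ for some absolute constant $c>0$. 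Rearranging, summing $t=0,\dots,T-1$, telescoping $f(w_0) - f(w_T) \leq f_0 - f_*$, and dividing by $cE\eta\eta_L T$ then yields the stated bound on $\min_t \mathbb{E}\|\nabla f(w_t)\|^2$, with the time-average $\tfrac{1}{T}\sum_t \lambda_t^2$ appearing naturally inside $\Phi_L$.

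The main obstacle will be the careful bookkeeping in the descent step: keeping track of three distinct error contributions (local drift over $E$ steps, cache-vs-long-term mismatch via $\lambda_t$, and intrinsic label-wise gradient magnitude via $\overline{\pi}$), while ensuring that all coefficients in front of $\|\nabla f(w_t)\|^2$ are dominated by the negative term produced by $\langle \nabla f(w_t), -E \nabla f(w_t)\rangle$. This is precisely the role of the two learning-rate conditions, and verifying that both $\eta_L \leq \tfrac{1}{8LE}$ (required by Lemma \ref{lemma2}) and the more restrictive $\eta_L \leq \tfrac{1}{\sqrt{60(A^2+1)}EL}$ (needed to absorb the $(A^2+1)\|\nabla f\|^2$ drift) jointly suffice will be the delicate technical point. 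Once that is done, identifying the $\Phi_G$, $\Phi_M$, $\Phi_L$ groupings is a matter of collecting terms by their dependence on $\sigma_G^2$, $\overline{\pi}^2\sigma_M^2$, and $\lambda_t^2\sigma_M^2$ respectively.
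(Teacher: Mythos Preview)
Your plan is correct and follows essentially the same route as the paper: smoothness descent, the split $\Delta_t=\bar\Delta_t+e_t$, control of the three error sources via Lemma~\ref{lemma1} and Lemma~\ref{lemma2}, absorption of the $(A^2+1)\|\nabla f(w_t)\|^2$ drift using the stated $\eta_L$ condition, and telescoping. The one technical difference worth flagging is how the inner-product term $A_1=\langle\nabla f(w_t),\mathbb{E}[\eta_L\Delta_t+\eta_L E\nabla f(w_t)]\rangle$ is handled: the paper does \emph{not} use Young's inequality on the separate pieces but instead applies the polarization identity $\langle x,y\rangle=\tfrac12(\|x\|^2+\|y\|^2-\|x-y\|^2)$, which manufactures a \emph{negative} term $-\tfrac{\eta_L}{2EK^2}\bigl\|\sum_{k,\tau}\nabla f^k(w^k_{t,\tau})-Ke_t\bigr\|^2$; this negative term is then used to cancel the matching positive contribution that appears when bounding $A_2=\mathbb{E}\|\Delta_t\|^2$, and the condition $\eta\eta_L\le\tfrac{1}{4EL}$ is precisely what makes that cancellation go through. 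Your Young-inequality route also works (the residual $E^2\|\nabla f(w_t)\|^2$ inside $\|\bar\Delta_t\|^2$ can be absorbed under the same $\eta\eta_L\le\tfrac{1}{4EL}$), but you should be aware that ``bound $\|\bar\Delta_t\|^2$ via Lemma~\ref{lemma2}'' is not quite complete---Lemma~\ref{lemma2} controls the drift, not the full aggregate gradient, so an explicit $E^2\|\nabla f(w_t)\|^2$ term survives and must be handled via the global step-size condition rather than being canceled.
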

\begin{proof}
The proof can be found in Appendix \ref{appdex:theorem1} of supplementary materials.
\end{proof}
The above convergence bound contains four parts: a vanishing term $\frac{f_0 - f_*}{c\eta\eta_L E T}$ as $T$ increases, a constant term $\Phi_G$ whose size depends on the problem instance parameters and is independent of $T$, a third term $\Phi_M$ is affected by the number of classes $R$ and maximum ratio $\overline{\pi}$, and a final term $\Phi_L$ that depends on the cumulative gap between the real and virtual sequences. The key insight derived by Theorem \ref{thm1} is that the SFL convergence bound depends on two additional terms $\Phi_M$ and $\Phi_L$ when compare to the conventional FL. For each client, if we could use the long-term label distribution, the cumulative ratio gap $\frac{1}{T}\sum_{t=0}^{T-1}\lambda^2_t=0$.  Consequently, the convergence bound is simply $\frac{f_0 - f_*}{c\eta\eta_L E T} + \Phi_G + \Phi_M$. However, this gap cannot be eliminated since the client cannot directly use the long-term label distribution in the local model updating. By applying the specific learning rate, with $T \to \infty$, we can get the following corollary for the general convergence rate:
\begin{corollary}\label{corollary1}
With learning rates $\eta_L = \frac{1}{\sqrt{T}E}$ and $\eta = \sqrt{EK} $, the convergence rate of the general case under full client participation is:
{\begin{align}\label{coro1}
& \mathcal{O}(\frac{1}{\sqrt{EKT}}) +  \underbrace{\mathcal{O}(\frac{\sigma_G^2}{T})}_{\Phi_G} +  \underbrace{\mathcal{O}(\frac{ R^2 \overline{\pi}^2 \sigma_M^2}{T})}_{\Phi_M} +  \underbrace{\mathcal{O}(\frac{\sigma_M^2 \sum_{t=0}^{T-1} \lambda^2_t}{T})}_{\Phi_L} \notag
\end{align}}
\end{corollary}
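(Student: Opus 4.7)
The plan is to prove Corollary \ref{corollary1} by direct substitution of the chosen learning rates $\eta_L = 1/(\sqrt{T}E)$ and $\eta = \sqrt{EK}$ into the four terms of the bound established in Theorem \ref{thm1}, and then reading off the dominant order in $T$, $E$, $K$ for each term. There is no further convergence machinery required here; Theorem \ref{thm1} already encapsulates all the SFL-specific analysis (including the $\lambda_t$ bookkeeping from Propositions \ref{prop1}--\ref{prop3} and Lemmas \ref{lemma1}--\ref{lemma2}), so the corollary is essentially a rate-reading exercise.

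First I would verify admissibility of the learning rates. The hypotheses of Theorem \ref{thm1} require $\eta_L \leq \min\{1/(\sqrt{60(A^2+1)}EL),\, 1/(8LE)\}$ and $\eta\eta_L \leq 1/(4EL)$. Plugging in the corollary's choices gives $\eta_L = 1/(\sqrt{T}E)$, which satisfies the first two bounds whenever $\sqrt{T} \geq \max\{\sqrt{60(A^2+1)}L,\,8L\}$, and $\eta\eta_L = \sqrt{K}/\sqrt{TE}$, which satisfies the third whenever $T \geq 16 E L^2 K$. Both conditions hold for $T$ sufficiently large, which is the regime in which an asymptotic $\mathcal{O}(\cdot)$ statement is meaningful.

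Next I would substitute term by term. For the optimization term, $\eta\eta_L E T = \sqrt{EK}\cdot(1/(\sqrt{T}E))\cdot E T = \sqrt{EKT}$, giving $(f_0-f_*)/(c\eta\eta_L E T) = \mathcal{O}(1/\sqrt{EKT})$. For $\Phi_G$ and $\Phi_M$, the common factor is $\eta_L^2 E^2 = 1/T$, so $\Phi_G = \mathcal{O}(\sigma_G^2/T)$ and $\Phi_M = \mathcal{O}(R^2\overline{\pi}^2\sigma_M^2/T)$. For $\Phi_L$, I would bound the bracket $5\eta_L^2 E L^2 + 3\eta\eta_L L E + 1$: the first summand equals $5L^2/(TE)$, which is $o(1)$; the second is controlled by the Theorem \ref{thm1} constraint $\eta\eta_L \leq 1/(4EL)$, so $3\eta\eta_L L E \leq 3/4$; hence the bracket is $\Theta(1)$, and $\Phi_L = \mathcal{O}(\sigma_M^2 \sum_{t=0}^{T-1}\lambda_t^2/T)$. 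Summing the four contributions yields the displayed rate.

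The only mildly delicate point, and the thing I would be careful to flag, is the $\Phi_L$ bracket: one might naively worry that $3\eta\eta_L L E$ could grow like $L\sqrt{EK/T}$, which would contaminate the rate. The saving grace is precisely the learning-rate condition $\eta\eta_L \leq 1/(4EL)$ inherited from Theorem \ref{thm1}, which keeps that term bounded by a constant and lets the corollary cleanly isolate the $\sum_t \lambda_t^2/T$ dependence that distinguishes the three update rules (via Propositions \ref{prop1}--\ref{prop3}). Everything else is routine algebra.
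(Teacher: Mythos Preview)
Your proposal is correct and follows exactly the approach the paper takes: the corollary is stated immediately after Theorem~\ref{thm1} with only the remark ``By applying the specific learning rate\ldots we can get the following corollary,'' and no further proof is given. Your term-by-term substitution, including the check that the learning-rate constraints of Theorem~\ref{thm1} are met for $T$ large and the observation that the $3\eta\eta_L L E$ summand in the $\Phi_L$ bracket stays bounded via $\eta\eta_L \leq 1/(4EL)$, supplies precisely the details the paper leaves implicit.
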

Based on the corollary \ref{corollary1} above, $\Phi_L$ is the major factor that determines whether the results converge to a stationary point without any constant terms. By substituting the values of $\lambda^2_t$ for the three update rules mentioned earlier, we can derive the corresponding final convergence rates. Furthermore, it is shown that under the \textbf{DRSR} update rule the SFL can eventually converge to a stationary point.

\section{Experiments}
\textbf{Setup}. Our experiments are based on two datasets: FMNIST and the network traffic classification dataset (NTC) extracted from ISCXVPN2016 as in \cite{zhang2020autonomous}. FMNIST is a commonly used dataset for image classification tasks, while NTC is a specialized dataset for network traffic classification. It contains 45000 network packets that are divided into 10 classes, each representing a different application such as YouTube or Skype, which are encrypted traffic samples using various methods. The packet vectors can be reshaped to 39 × 39 bytes gray images. For both datasets, we use LeNet \cite{lecun1998gradient} as the backbone model, specifically modified according to the different datasets. All experiment results reported are the average of 10 independent runs.

\textbf{Data Stream Generation}. The SFL system consists of 10 clients, and every client receives training data samples from $C$ classes in the long-term distribution, which are non-i.i.d between clients. To simulate the time-varying short-term distributions, we generate 10 possible distributions for each client. In each time slot, the client receives one distribution as the short-term distribution. To capture the temporal correlation of the short-term distribution, the transition between any two short-term distributions is governed by a probability determined by the Kullback–Leibler (K-L) divergence \cite{joyce2011kullback} between these two label distributions. The probability of distribution transition between two distributions is higher when the K-L divergence between them is lower. The long-term distribution is obtained as the stationary distribution of these short-term distributions, based on the transition matrix, which is unknown to the client in advance.

%\textbf{Proposed Algorithms}. Recall that we proposed three local cache update rules. The first rule is the \textbf{FIFO}, this is the straightforward update rule of SFL. The client deletes a certain number ($B_s$) of old data samples in chronological order to make room for new data. The second rule is the \textbf{SRSR} update pattern, in which with a static ratio $\theta$, the new data sets are smoothly generated from the newly received data and recorded dataset. The third rule is the \textbf{DRSR} update pattern, in which we use dynamic ratio $\theta_t$ to control the extraction rates of the local and streaming datasets, respectively. Next, we will compare these upload rules in various settings.

\textbf{Benchmarks}. In the experiment, the following two benchmarks are used for performance comparison. 
\begin{enumerate}
    \item \textbf{Full Information (FULL)}. In this ideal scenario, each client has a local training dataset with a distribution the same as the long-term distribution. 
    \item \textbf{Lazy Updates (LAZY)}. In this scenario, the client keeps the initial training dataset in the cache and does not update its dataset. The client then conducts local training by utilizing this static local dataset.
\end{enumerate}

As we have analyzed in Section IV, both \textbf{FIFO} and \textbf{SRSR} update rules can converge to a stationary point with infinite cache capacity. However, since infinite cache capacity is impractical in real-world scenarios, we will only conduct experiments under finite cache capacity. 

\begin{figure}[h]
\centering
\subfloat[NTC Non-iid ($C = 3$)]{\includegraphics[width=0.49\linewidth]{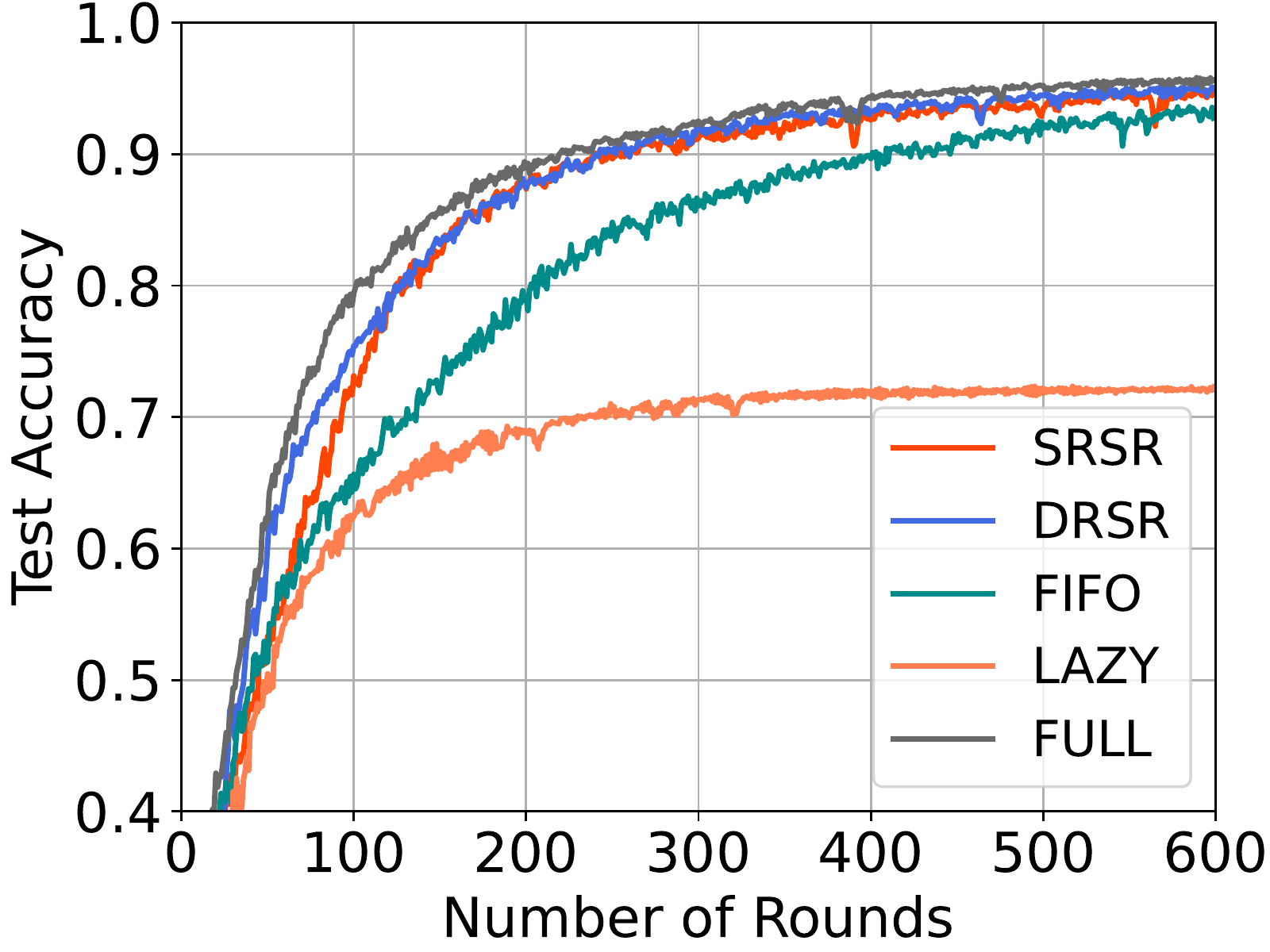}} 
\subfloat[FMNIST Non-iid ($C = 3$)]{\includegraphics[width=0.49\linewidth]{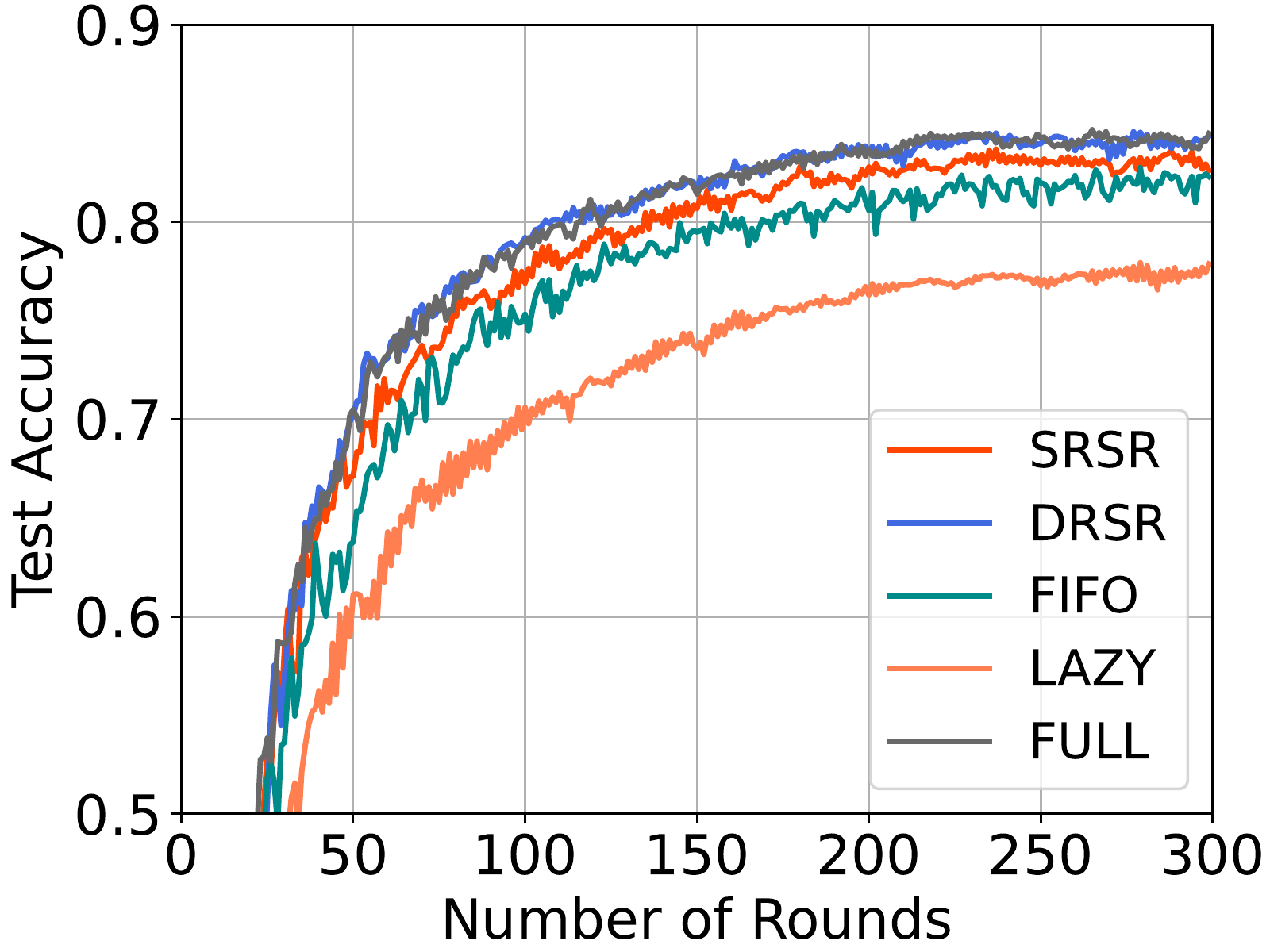}} 
\caption{Performance comparison of the proposed update rules and benchmarks with full participation.} \label{sfl-cpu}
\end{figure}

\textbf{Performance comparison}. We first compare the convergence performance between our proposed update rules and benchmarks with parameters $\left \{ B = 300, B_s =150, C =3, \theta= \frac{2}{3} \right \}$. Fig.~\ref{sfl-cpu}(a) and (b) plot the convergence curves on the NTC dataset and the FMNIST dataset with full client participation, respectively. Several observations are made as follows. First,  \textbf{DRSR}, \textbf{SRSR} and \textbf{FIFO} outperform \textbf{LAZY} in terms of test accuracy and convergence speed, particularly in the later stages. \textbf{DRSR} and \textbf{SRSR} achieve performance close to \textbf{FULL} on the NTC dataset due to their ability to gradually approximate the long-term label distribution. Second, \textbf{DRSR} and \textbf{SRSR} outperform \textbf{FIFO} in the entire learning process, mainly attributed to their ability to retain the knowledge of past data streams. Third, the learning performance of \textbf{DRSR}, \textbf{SRSR} and \textbf{FIFO} is significantly better than \textbf{LAZY} on the NTC dataset, while the performance gain is less significant on the FMNIST dataset. Overall, \textbf{DRSR} and \textbf{SRSR} are better than \textbf{FIFO} on both datasets. More comparisons between \textbf{DRSR} and \textbf{SRSR} will be given later.

\begin{figure}[h]
\centering
\subfloat[The per-slot discrepancy $\psi_t$]{\includegraphics[width=0.49\linewidth]{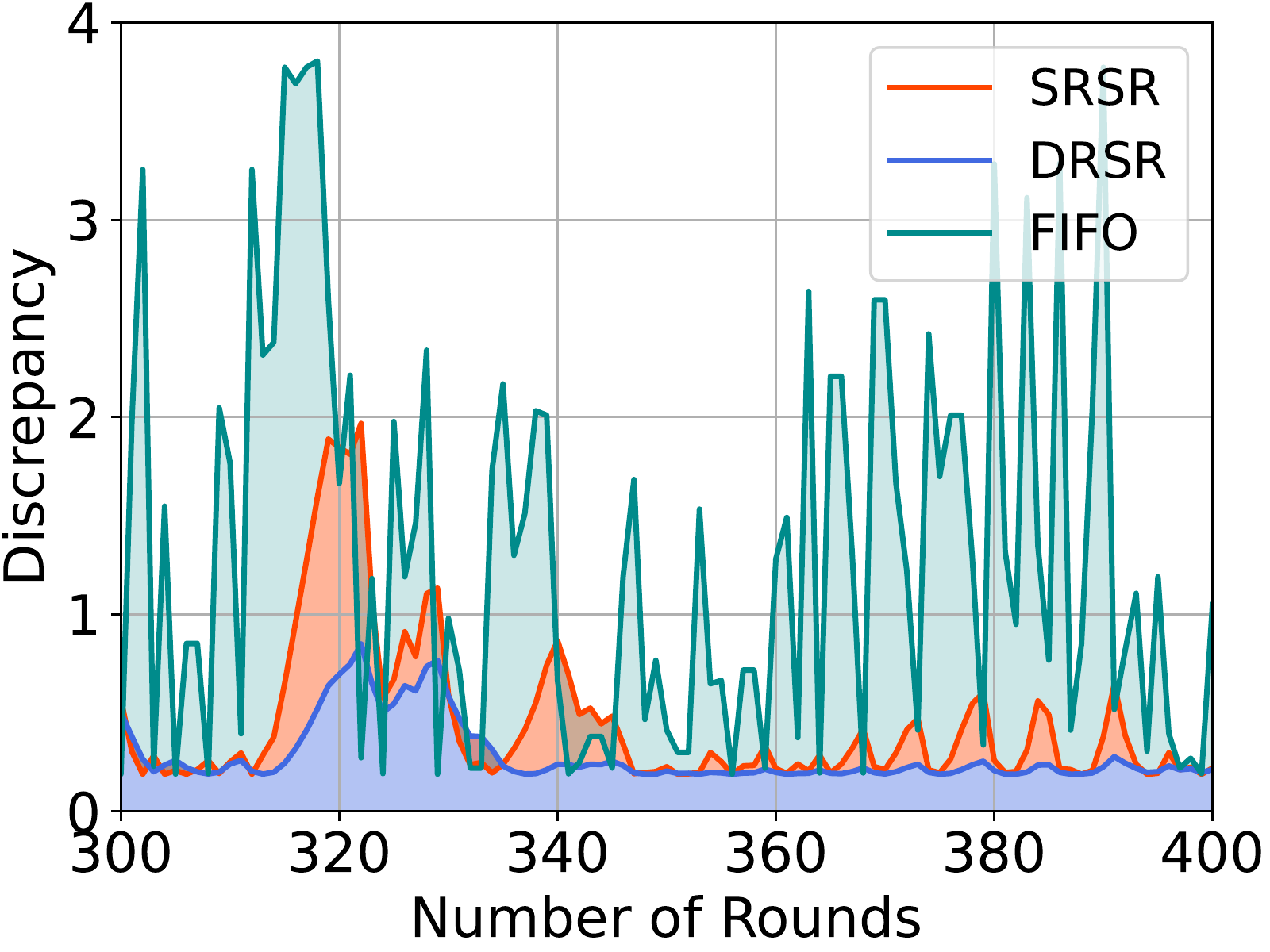}} 
\subfloat[The accumulated discrepancy $\psi$]{\includegraphics[width=0.49\linewidth]{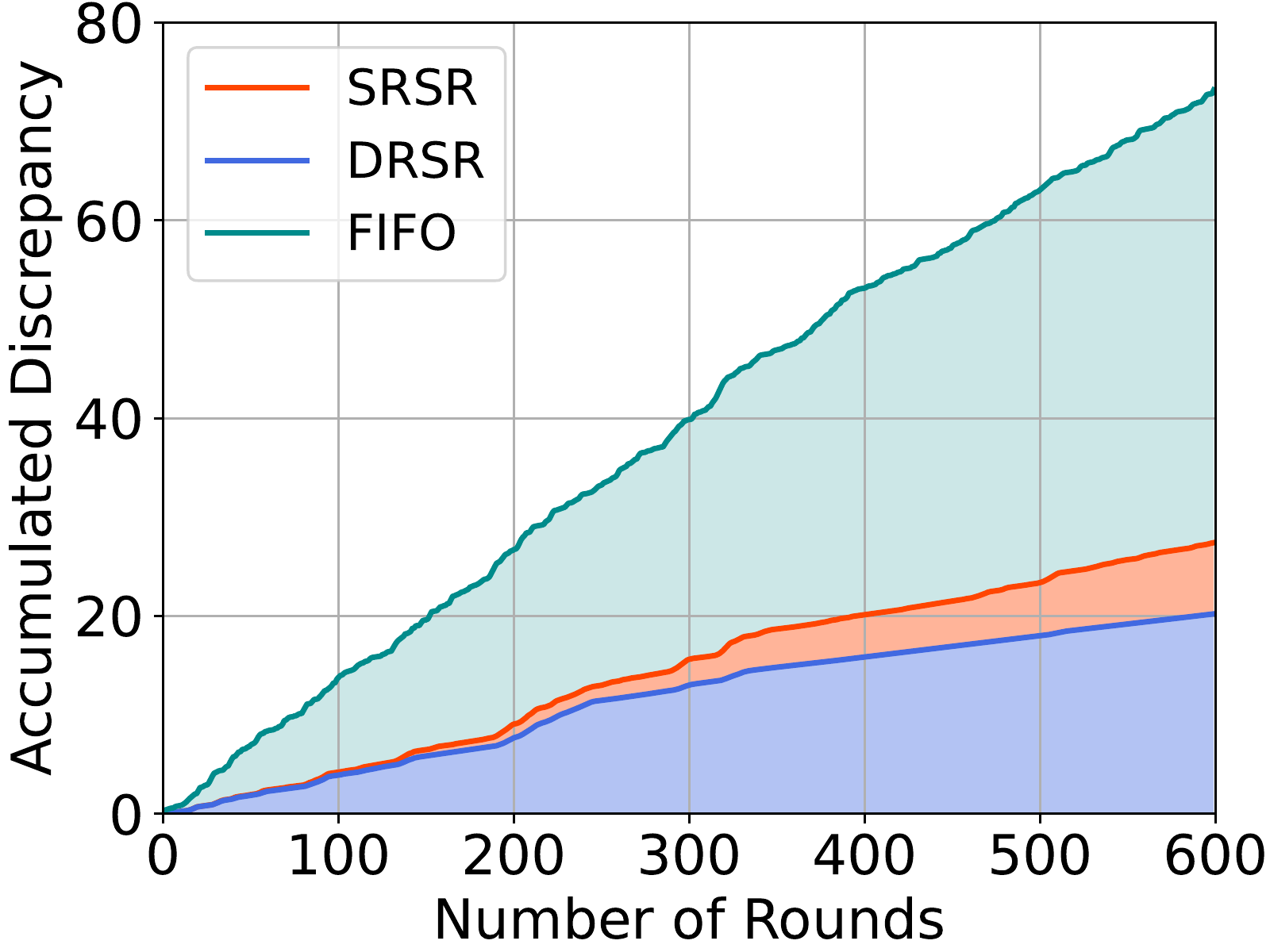}} 
\caption{Discrepancy between the cached label distribution ${v}^{k, r}_{t}$ and the long-term label distribution ratio $\pi^{k, r}$.}  \label{sfl-tg} 
\end{figure}

\textbf{Distribution discrepancy}. The learning performance of \textbf{SFL} depends on how well it can approximate the long-term label distribution. In this part, we examine how the distribution discrepancy changes during the training process for different local dataset update rules. The per-slot discrepancy is defined as $\psi_t = \sum_{k \in K} \sum_{r \in R} ( {v}^{k, r}_{t} -  \pi^{k, r} )^2$ and the accumulated discrepancy is defined as $\psi = \sum_{t \in T} \sum_{k \in K} \sum_{r \in R} ( {v}^{k, r}_{t} -  \pi^{k, r} )^2$. Fig.~\ref{sfl-tg} shows the per-slot discrepancy and the accumulated discrepancy for the different update rules. From Fig.~\ref{sfl-tg}(a), we can see that \textbf{DRSR} and \textbf{SRSR} exhibit lower discrepancy and fewer fluctuations compared to \textbf{FIFO}, and the discrepancy of \textbf{DRSR} decreases over time. Fig.~\ref{sfl-tg}(b) demonstrates that \textbf{DRSR} has the lowest cumulative discrepancy throughout the learning process and performs better than both \textbf{SRSR} and \textbf{FIFO}. 

\begin{figure}[h]
\centering
\subfloat[NTC Non-iid ($B_s = 30$)]{\includegraphics[width=0.49\linewidth]{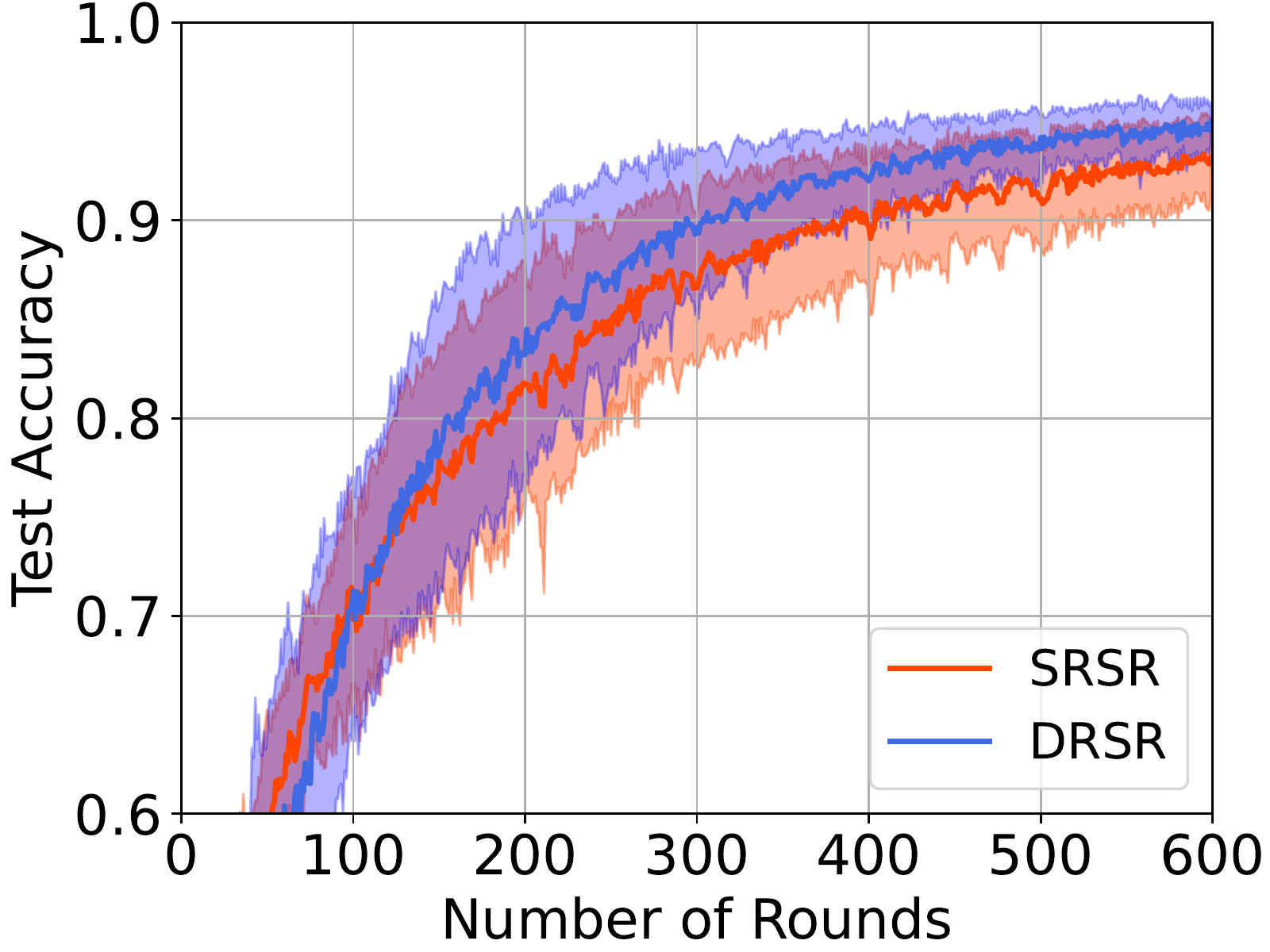}} 
\subfloat[NTC Non-iid ($B_s = 150$)]{\includegraphics[width=0.49\linewidth]{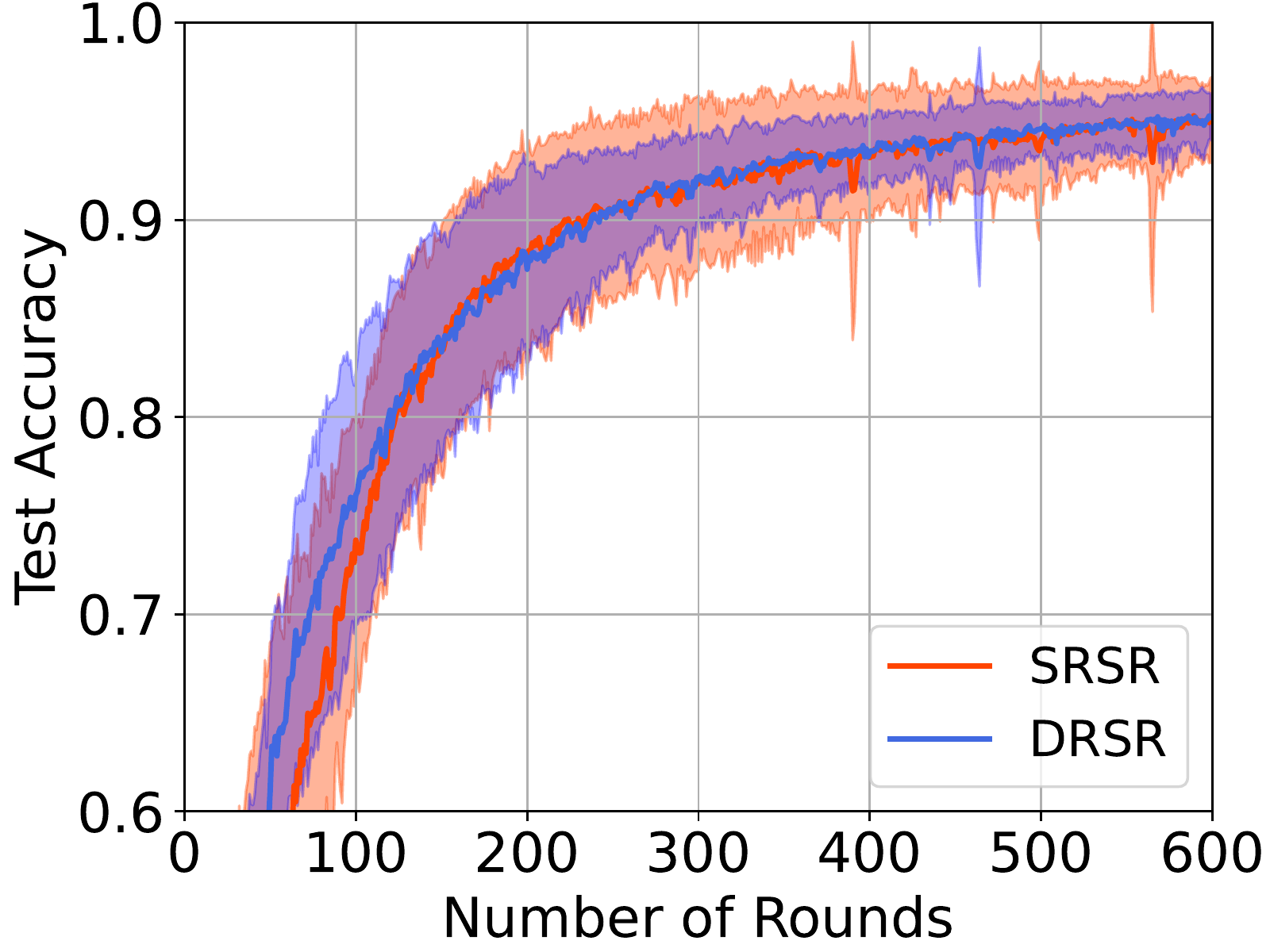}} 
\caption{Performance comparison of \textbf{SRSR} and \textbf{DRSR} with different $B_s$.}   \label{sfl-ssd}
\end{figure}

\textbf{Impact of streaming data size $B_s$}. The proposed update rules, \textbf{DRSR} and \textbf{SRSR} were examined for their learning performance when trained with different values of $B_s$ and a constant $B = 300$. A larger value of $B_s$ corresponds to a larger streaming packet per round. The NTC dataset was used with two different streaming data sizes, $B_s \in \left \{ 30, 150 \right \}$, to investigate the effects of varying $B_s$. The results, shown in Fig.~\ref{sfl-ssd}, indicate that both \textbf{DRSR} and \textbf{SRSR} are capable of achieving the desired level of test accuracy. The variance of \textbf{DRSR} decreases with training, particularly in the later stages, where it is significantly smaller than the variance of \textbf{SRSR}. 

\begin{figure}[h]
\centering
\subfloat[NTC Non-iid ($B = 100$)]{\includegraphics[width=0.49\linewidth]{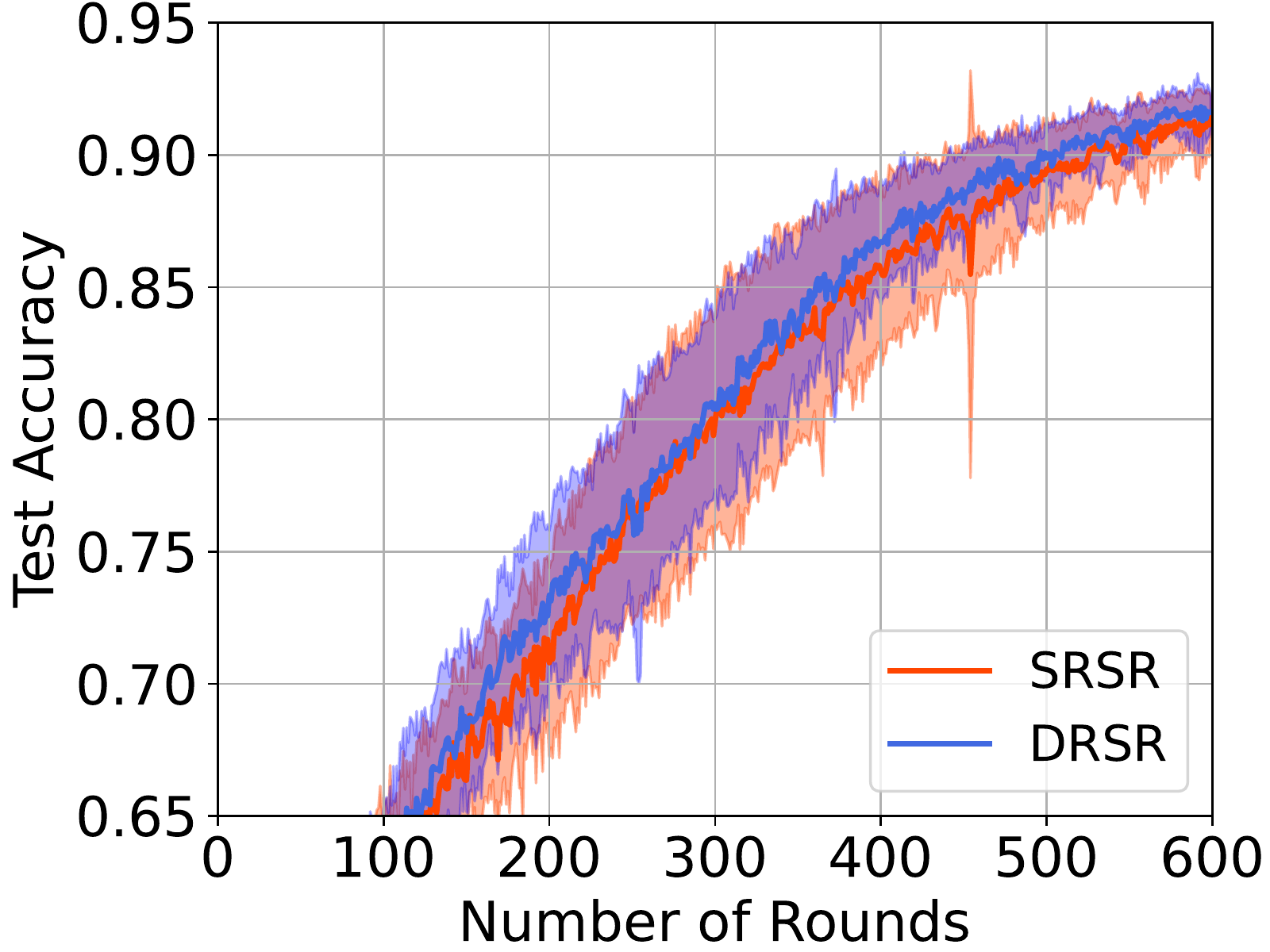}} 
\subfloat[NTC Non-iid ($B = 300$)]{\includegraphics[width=0.49\linewidth]{figure/Ntc_bs1.pdf}} 
\caption{Performance comparison of \textbf{SRSR} and \textbf{DRSR} with different $B$.}   \label{sfl-smax}
\end{figure}

\textbf{Impact of cache capacity $B$}. In this set of experiments, we investigate the impact of varying the cache capacity, represented by $B$, on the learning performance while keeping the ratio ($\frac{B_s}{B}$) constant at 0.1. We use the NTC dataset and tested two different values of $B \in \left \{ 100, 300 \right \}$. Our results, as shown in Fig.~\ref{sfl-smax}, suggest that increasing $B$ leads to higher test accuracy and faster training rates. For example, at 200 rounds, the test accuracy is 0.72 for $B$ = 100 and 0.83 for $B$ = 300. However, in practical situations, the cache capacity of a client is often limited, despite the potential for better performance with larger values of $B$.

\begin{figure}[h]
\centering
\subfloat[NTC Non-iid ($B_s = 30$)]{\includegraphics[width=0.49\linewidth]{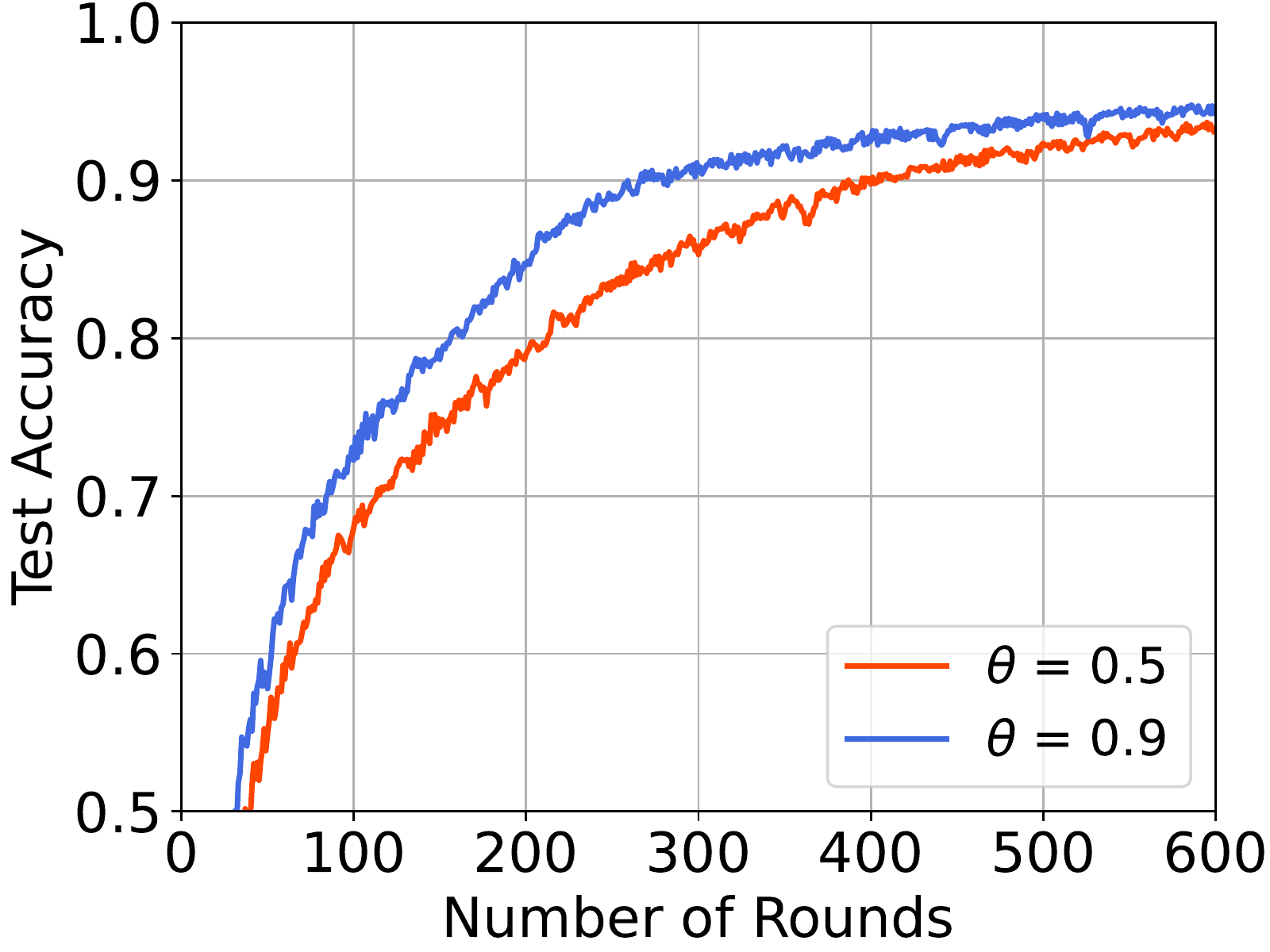}} 
\subfloat[NTC Non-iid ($B_s = 150$)]{\includegraphics[width=0.49\linewidth]{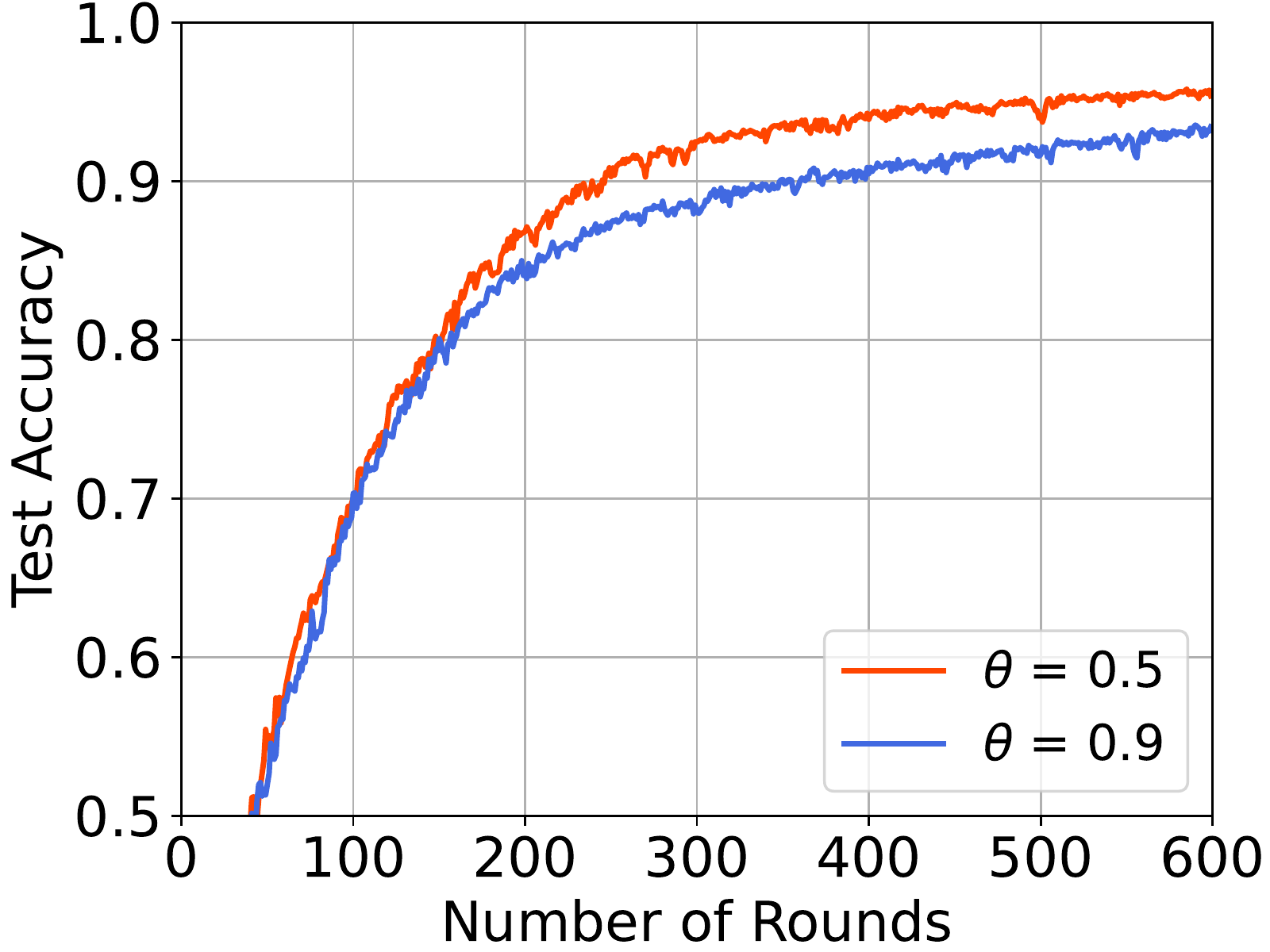}} 
\caption{Performance comparison of \textbf{SRSR} with different $\theta$.}   \label{sfl-theta}
\end{figure}

\textbf{Impact of parameter $\theta$}. In this section, we examine the impact of $\theta$, which tunes the amount of incoming data to put in the cache, on the learning performance of \textbf{SRSR}. The rest parameters are $\left \{ B = 300, B_s = [30, 150], C =3\right \}$. The experiment results are shown in Fig.~\ref{sfl-theta}. When $B_s$ is large, a smaller $\theta$ leads to better learning performance. This occurs because larger $\theta$ can cause a substantial shift in the cached label distribution, leading to an increase in variation (as seen in Fig.~\ref{sfl-theta}(b)). However, when $B_s$ is small, choosing a small value for $\theta$ leads to a degradation of learning performance. The reason for this is that the ratio in the cached label distribution changes at a slow pace at the beginning (as seen in Fig.~\ref{sfl-theta}(a)). Consequently, determining the appropriate $\theta$ beforehand is a challenging task. However, this issue can be resolved using the \textbf{DRSR} update rule, which reduces $\theta$ gradually over time.

\section{Conclusion}\label{section 7}
Our paper presents a novel Federated Learning (FL) framework named SFL, which differs from traditional FL by operating on a dynamic dataset. This dynamic nature of the data, coupled with the limited cache capacity on clients, results in discrepancies between the local training dataset and the long-term data distribution. We propose three update rules for the local cache update process in the SFL problem and provide a thorough theoretical analysis and experimental comparison to support our work. Future research will focus on developing more effective update rules for SFL to accelerate the training convergence speed and explore the potential of applying SFL to other practical scenarios.

%The authors would like to thank...

\ifCLASSOPTIONcaptionsoff
  \newpage
\fi

% references section
%reference
\bibliographystyle{IEEEtran}
\bibliography{bibligraphy}

\newpage
\setcounter{page}{1}
\appendices
\section{Proof of Proposition \ref{prop1}}
\label{appexprop1}
First notice $v^{k,r}_t$ is essentially the average short-term label distribution of periods $t - M + 1$ through $t$, thus
\begin{align}
    &\mathbb{E}[(v^{k,r}_t - \pi^{k,r})^2] = \mathbb{E}[(\frac{1}{M}\sum_{m=1}^M u^{k,r}_{t-m+1} - \pi^{k,r})^2]\\
    &=\frac{1}{M^2}\sum_{m=1}^M \mathbb{E}[(u^{k,r}_{t-m+1} - \pi^{k,r})\sum_{m'=1}^M(u^{k,r}_{t-m'+1} - \pi^{k,r})]\\
    &\leq \frac{1}{M} (\min\{2\Gamma+1, M\})\delta^2
\end{align}

\section{Proof of Proposition \ref{prop2}}
\label{appexprop2}
The discrepancy can be bounded as follows. 
\begin{align}
    &\mathbb{E}[(v^{k,r}_t - \pi^{k,r})^2] =\mathbb{E}[(\frac{\theta }{M}\sum_{\tau=0}^t(1-\frac{\theta}{M})^\tau u^{k, r}_{t-\tau} - \pi^{k, r})^2]\\
    =&\mathbb{E}[(\frac{\theta}{M}\sum_{\tau=0}^t(1-\frac{\theta}{M})^\tau(u^{k, r}_{t-\tau} - \pi^{k, r}) + (1-\frac{\theta}{M})^t (-\pi^{k,r}))^2] \label{prop2.1}\\
    \leq & 2 \mathbb{E}[(\frac{\theta}{M}\sum_{\tau=0}^t(1-\frac{\theta}{M})^\tau(u^{k, r}_{t-\tau} - \pi^{k, r}))^2] + 2 (1-\frac{\theta}{M})^{2t} (\pi^{k,r})^2 \label{prop2.2}\\
    \leq & \frac{2\theta^2}{M^2}\sum_{\tau=0}^t (1-\frac{\theta}{M})^\tau \mathbb{E}[(u^{k,r}_{t-\tau}-\pi^{k,r}) \notag \\
     & \sum_{\tau'=0}^t (1-\frac{\theta}{M})^{\tau'}(u^{k,r}_{t-\tau'} - \pi^{k, r})] + 2 (1-\frac{\theta}{M})^{2t} (\pi^{k,r})^2\\
    \leq & \frac{2\theta^2}{M^2}\sum_{\tau=0}^t (1-\frac{\theta}{M})^\tau (1-\frac{\theta}{M})^\tau \sum_{i=-\Gamma}^\Gamma (1-\frac{\theta}{M})^i\delta^2 \notag \\
     & + 2 (1-\frac{\theta}{M})^{2t} (\pi^{k,r})^2 \label{prop2.3}\\
    =& 2 \frac{\theta}{M}\frac{1- (1-\frac{\theta}{M})^{2t}}{1-(1-\frac{\theta}{M})^2}\left((1-\frac{\theta}{M})^{-\Gamma} - (1-\frac{\theta}{M})^{\Gamma+1}\right)\delta^2 \notag \\
    & + 2 (1-\frac{\theta}{M})^{2t} (\pi^{k,r})^2\\
    =&2(1-(1-\frac{\theta}{M})^{2t})\frac{(1-\frac{\theta}{M})^{-\Gamma} - (1-\frac{\theta}{M})^{\Gamma+1}}{2 - \frac{\theta}{M}}\delta^2 \notag \\
    & + 2 (1-\frac{\theta}{M})^{2t} (\pi^{k,r})^2
\end{align}
where the Eq.\eqref{prop2.1} uses $\frac{\theta}{M}\sum_{\tau=0}^t(1-\frac{\theta}{M})^\tau + (1-\frac{\theta}{M})^t = 1$; Eq.\eqref{prop2.2} is a result of the triangle inequality; Eq.\eqref{prop2.3} uses Assumption \ref{assm1}.

\section{Proof of Corollary \ref{coro11}}
\label{appexcoro11}
It is easy to see that $\mathbb{E}[(v^{k,r}_t - \pi^{k,r})^2]$ is a weighted sum of $(\pi^{k,r})^2$ and $\frac{(1-\frac{\theta}{M})^{-\Gamma} - (1-\frac{\theta}{M})^{\Gamma+1}}{2 - \frac{\theta}{M}}\delta^2$ where the weight $(1-\frac{\theta}{M})^{2t}$ decreases with $t$. Moreover, it is easy to prove that $\frac{(1-\frac{\theta}{M})^{-\Gamma} - (1-\frac{\theta}{M})^{\Gamma+1}}{2 - \frac{\theta}{M}}\delta^2$ is increasing in $\theta/M$. Thus, by choosing $\theta$ sufficiently small, $\frac{(1-\frac{\theta}{M})^{-\Gamma} - (1-\frac{\theta}{M})^{\Gamma+1}}{2 - \frac{\theta}{M}}\delta^2$ can be made smaller than $(\pi^{k,r})^2$. Therefore, the weighted sum decreases with time and approaches $\frac{(1-\frac{\theta}{M})^{-\Gamma} - (1-\frac{\theta}{M})^{\Gamma+1}}{2 - \frac{\theta}{M}}\delta^2$ in the limit. 

\section{Proof of Proposition \ref{prop3}}
\label{appexprop3}
We bound the discrepancy as follows. First plugging $\theta_t = \frac{B}{B_s t}$ into Eq.\eqref{drsr1}, we have
\begin{align}
    \tilde{n}_r &= \frac{t-1}{t}n_r(\mathcal{L}^k_{t-1}) + \frac{B}{B_s t}n_r(\mathcal{S}^k_t) \\
    &= \frac{t-1}{t}\left(\frac{t-2}{t-1}n_r(\mathcal{L}^k_{t-2}) + \frac{B}{B_s (t-1)}n_r(\mathcal{S}^k_{t-1})\right) \notag \\
    &+ \frac{B}{B_s t}n_r(\mathcal{S}^k_t)\\
    &= \frac{B}{B_s t}\sum_{\tau=1}^t n_r(\mathcal{S}^k_{\tau})
\end{align}
so we can obtain $v^{k,r}_t = \frac{1}{t}\sum_{\tau = 1}^t u^{k,r}_t$, then we will get
\begin{align}
    &\mathbb{E}[(v^{k,r}_t - \pi^{k,r})^2] \notag \\
    =& \mathbb{E}[(\frac{1}{t}\sum_{\tau=1}^t u^{k,r}_\tau - \pi^{k,r})^2] = \mathbb{E}[(\frac{1}{t}\sum_{\tau=1}^t (u^{k,r}_\tau - \pi^{k,r}))^2] \\
   =&\frac{1}{t^2}\sum_{\tau=1}^t \mathbb{E}\left[(u^{k,r}_\tau - \pi^{k,r}) \sum_{\tau'=1}^t (u^{k,r}_{\tau'} - \pi^{k,r})\right] \leq  \frac{2\Gamma + 1}{t}\delta^2
\end{align}
where the last inequality uses Assumption \ref{assm1}.

\section{Proof of Lemma \ref{lemma1}}
\label{appexlemma1}
The difference between the \textit{real local gradient} and \textit{virtual local gradient} can be bounded as follows:
\begin{align}
    & \mathbb{E} [| g_{t, \tau}^k - \hat{g}_{t, \tau}^k |^2] \\
    & = \mathbb{E} [|\sum_{r=1}^R {v}^{k, r}_t \nabla F^k(w^k_{t, \tau}; \mathcal{L}^{k,r}_t) - \sum_{r=1}^R  {\pi}^{k, r}\nabla F^k(w^k_{t, \tau}; \mathcal{L}^{k,r}_t)|^2]
     \\ & = \mathbb{E} [|\sum_{r=1}^R ({v}^{k, r}_t-{\pi}^{k, r}) \nabla F^k(w^k_{t, \tau}; \mathcal{L}^{k,r}_t)|^2]
     \\ & \leq R  \sum_{r=1}^R \mathbb{E} [| ({v}^{k, r}_t-{\pi}^{k, r}) \nabla F^k(w^k_{t, \tau}; \mathcal{L}^{k,r}_t)|^2]
     \\ & \leq  R  \sum_{r=1}^R \mathbb{E} [| ({v}^{k, r}_t-{\pi}^{k, r})|^2] \mathbb{E} [|\nabla F^k(w^k_{t, \tau}; \mathcal{L}^{k,r}_t)|^2]
     \\ & \leq R^2 \lambda^2_t \sigma_M^2
\end{align}
Then the proof of the second inequality is as follows:
\begin{align}
 & \mathbb{E} [| \hat{g}_{t, \tau}^k - \nabla f^{k}(w^k_{t,\tau})|^2] \\
 & = \mathbb{E} [|\sum_{r=1}^R {\pi}^{k, r} \nabla F^k(w^k_{t, \tau}; \mathcal{L}^{k,r}_t) - \sum_{r=1}^R  {\pi}^{k, r}\nabla f^{k,r}(w^k_{t, \tau})|^2]
 \\& = \mathbb{E} [|\sum_{r=1}^R {\pi}^{k, r}( \nabla F^k(w^k_{t, \tau}; \mathcal{L}^{k,r}_t) -\nabla f^{k,r}(w^k_{t, \tau}))|^2]
     \\ & \leq R \overline{\pi}^2 \sum_{r=1}^R \mathbb{E} [|( \nabla F^k(w^k_{t, \tau}; \mathcal{L}^{k,r}_t) -\nabla f^{k,r}(w^k_{t, \tau}))|^2]
 \\&\leq 2 R^2 \overline{\pi}^2 \sigma_M^2
\end{align}
where $\overline{\pi} = \max_{k,r} \pi^{k, r}$ is the maximum ratio of the long-term label distribution.

\section{Proof of Lemma \ref{lemma2}}
\label{appexlemma2}
In this subsection, we will get the local updates bound,

\begin{align}
 &\mathbb{E}[\|{w}^k_{t, \tau} - {w}_t\|^2]\\
 = & \mathbb{E}[\|{w}^k_{t, \tau-1 } - {w}_t- \eta_L {g}_{t, \tau - 1}^{k}\|^2 ]\\
 = & \mathbb{E}[\|{w}^k_{t, \tau-1 } - {w}_t- \eta_L ({g}_{t, \tau - 1}^{k} - \hat{g}_{t, \tau - 1}^{k} + \hat{g}_{t, \tau - 1}^{k}  \notag\\
- & \nabla f^{k}({w}^{k}_{t,\tau - 1}) + \nabla f^{k}({w}^{k}_{t,\tau - 1}) -  \nabla f^{k}({w}_{t}) + \nabla f^{k}({w}_{t}) ) \|^2 ]\\
\leq &\left(1 + \frac{1}{2E -1} \right) \mathbb{E}\|{w}^k_{t, \tau-1} - {w}_t\|^2 + \eta_L^2 \mathbb{E}\|{g}_{t, \tau - 1}^{k} - \hat{g}_{t, \tau - 1}^{k}\|^2 \nonumber\\
+ & 6 E\eta_L^2 \mathbb{E}\|\hat{g}_{t, \tau - 1}^{k} - \nabla f^{k}({w}^{k}_{t,\tau - 1})\|^2 \nonumber \\
+ & 6 E \eta_L^2 \mathbb{E} \| \nabla f^{k}({w}^{k}_{t,\tau - 1}) -  \nabla f^{k}({w}_{t})\|^2 + 6 E \eta_L^2 \mathbb{E} \| \nabla f^{k}({w}_{t})\|^2\\
\leq & \left(1 + \frac{1}{2E -1} \right) \mathbb{E}\|{w}^k_{t, \tau-1} - {w}_t\|^2 +  \eta_L^2 R^2 \lambda^2_t \sigma_M^2  \nonumber \\
+ & 12 E\eta_L^2 R^2 \overline{\pi}^2 \sigma_M^2  +  6E \eta_L^2 L^2 \mathbb{E} \| {w}^{k}_{t,\tau - 1} -  {w}_{t}\|^2 \nonumber\\
+  & 6E \eta_L^2 \sigma^2_G + 6E \eta_L^2 (A^2 +1) \|\nabla f(x) \|^2 \\
\leq & \left(1 + \frac{1}{E -1} \right)\mathbb{E}\|{w}^k_{t, \tau-1} - {w}_t\|^2 + \eta_L^2 R^2 \lambda^2_t \sigma_M^2 \nonumber\\
+ &  12 E\eta_L^2 R^2 \overline{\pi}^2 \sigma_M^2 +  6E \eta_L^2 \sigma^2_G + 6E \eta_L^2 (A^2 +1) \|\nabla f(x) \|^2
\end{align}

Unrolling the recursion, we get:
\begin{align}
& \frac{1}{K}\sum_{k=1}^K \mathbb{E}[\|{w}^k_{t, \tau} - {w}_t\|^2]\\ 
\leq & \sum_{p=0}^{\tau-1} \left(1 + \frac{1}{E -1} \right)^p [ \eta_L^2 R^2 \lambda^2_t \sigma_M^2 + 12 E\eta_L^2 R^2 \overline{\pi}^2 \sigma_M^2  \nonumber \\
+ & 6E \eta_L^2 \sigma^2_G + 6E \eta_L^2 (A^2 +1) \|\nabla f(x) \|^2  ] \\
\leq & (E-1) \left [(1 +   \frac{1}{E -1})^E - 1\right ]  [ \eta_L^2 R^2 \lambda^2_t \sigma_M^2 \nonumber \\
+ & 12 E\eta_L^2 R^2 \overline{\pi}^2 \sigma_M^2 +  6E \eta_L^2 \sigma^2_G + 6E \eta_L^2 (A^2 +1) \|\nabla f(x) \|^2  ]\\
 \leq & 5 E \eta_L^2 R^2 \lambda^2_t \sigma_M^2  +  60E^2\eta_L^2 R^2 \overline{\pi}^2 \sigma_M^2 \nonumber \\
+  & 30E^2 \eta_L^2 \sigma^2_G + 30 E^2 \eta_L^2 (A^2 +1) \|\nabla f(x) \|^2
\end{align}
This completes the proof of lemma \ref{lemma2}.

\section{Proof of Theorem \ref{thm1}}
\label{appdex:theorem1}
In this section, we give the proofs in detail. Due to the smoothness in Assumption \eqref{assm:smoothness}, taking expectation of $f(w_{t+1})$ over the randomness in round $t$, we have
\begin{align}
    & \mathbb{E}_t[f(w_{t+1})]  \\
    \leq & f(w_t) + \langle \nabla f(w_t), \mathbb{E}_t[w_{t+1} - w_t]\rangle + \frac{L}{2}\mathbb{E}_t[\|w_{t+1} - w_t\|^2]\\
    = & f(w_t) + \langle \nabla f(w_t), \mathbb{E}_t[\eta \eta_L \Delta_t + \eta\eta_L E \nabla f(w_t) 
     \notag \\
    - & \eta\eta_L E \nabla f(w_t)]\rangle + \frac{L}{2}\eta^2\eta_L^2 \mathbb{E}_t[\|\Delta_t\|^2]\\
    = & f(w_t) - \eta\eta_L E\|\nabla f(w_t)\|^2 \notag \\
    + & \eta \underbrace{ \langle \nabla f(w_t), \mathbb{E}[ \eta_L \Delta_t + \eta_L E\nabla f(w_t)]\rangle}_{A_1} 
     + \frac{L}{2}\eta^2 {\eta_L^2}\underbrace{\mathbb{E}_t[\|\Delta_t\|^2]}_{A_2}
\end{align}
Note that the term $A_1$ can be bounded as follows: 
\begin{align}
    & A_1 = \langle {\nabla f(w_t)}, \mathbb{E}_t[\eta_L {\Delta_t} + \eta_L E {\nabla f(w_t)}]\rangle\\
    = & \langle {\nabla f(w_t)}, \mathbb{E}_t[\eta_L {\bar{\Delta}_t} + {\eta_L e_t} + \eta_L E {\nabla f(w_t)}]\rangle\\
    = &\langle {\nabla f(w_t)}, \mathbb{E}_t [-\frac{1}{K}\sum_{k=1}^K\sum_{\tau = 0}^{E-1} \eta_L  {\nabla {F}^k({w}^k_{t,\tau})} \notag\\
    + & {\eta_L e_t} + \eta_L E\frac{1}{K}\sum_{k=1}^K {\nabla F^k(w_t)} ] \rangle\\
    = & \langle \sqrt{\eta_L E} \nabla f(w_t), -\frac{\sqrt{\eta_L}}{K\sqrt{E}}\mathbb{E}_t[\sum_{k=1}^K\sum_{\tau = 0}^{E-1}(\nabla {F}^k({w}^k_{t, \tau}) \notag\\
    - & \nabla F^k(w_t)) - {K e_t} ]\rangle\\
    \overset{(a_1)}{=} & \frac{\eta_L E}{2}\|\nabla f(w_t)\|^2   \nonumber\\
    + &\frac{\eta_L}{2E K^2} \mathbb{E}_t \left\|\sum_{k=1}^K\sum_{\tau = 0}^{E-1}(\nabla {F}^k({w}^k_{t, \tau}) - \nabla F^k(w_t)) - {K e_t} \right\|^2  \nonumber\\
    - &\frac{\eta_L}{2E K^2}\mathbb{E}_t\left\| \sum_{k=1}^K\sum_{\tau = 0}^{E-1}\nabla {F}^k({w}^k_{t, \tau})  - {K e_t} \right\|^2\\
    \overset{(a_2)}{\leq}& \frac{\eta_L E}{2}\|\nabla f(w_t)\|^2 + \frac{\eta_L}{E K^2} \mathbb{E}_t \left\|\sum_{k=1}^K\sum_{\tau = 0}^{E-1}(\nabla {F}^k({w}^k_{t, \tau}) - \nabla F^k(w_t))\right\|^2  \nonumber\\
    - &\frac{\eta_L}{2E K^2}\mathbb{E}_t\left\| \sum_{k=1}^K\sum_{\tau = 0}^{E-1}\nabla {F}^k({w}^k_{t, \tau})  - {K e_t}\right\|^2 + {\frac{\eta_L \mathbb{E}_t\|e_t\|^2}{E}}\\
    \overset{(a_3)}{\leq}& \frac{\eta_L E}{2}\|\nabla f(w_t)\|^2 + \frac{\eta_L}{K} \sum_{k=1}^K\sum_{\tau = 0}^{E-1} \mathbb{E}_t \left\|{\nabla {F}^k({w}^k_{t, \tau})} - {\nabla F^k(w_t)}\right\|^2  \nonumber\\
    - &\frac{\eta_L}{2E K^2}\mathbb{E}_t\left\| \sum_{k=1}^K\sum_{\tau = 0}^{E-1}\nabla {F}^k({w}^k_{t, \tau})  - {K e_t} \right\|^2 + {\frac{\eta_L \mathbb{E}_t\|e_t\|^2}{E}}\\
    \overset{(a_4)}{\leq}& \frac{\eta_L E}{2}\|\nabla f(w_t)\|^2 + \frac{\eta_L L^2}{K} \sum_{k=1}^K\sum_{\tau = 0}^{E-1} \mathbb{E}_t \left\|{{w}^k_{t, \tau}}- {w_t}\right\|^2  \nonumber\\
    - &\frac{\eta_L}{2E K^2}\mathbb{E}_t\left\| \sum_{k=1}^K\sum_{\tau = 0}^{E-1}\nabla F^k({w}^k_{t, \tau})  - {K e_t}\right\|^2 + {\frac{\eta_L \mathbb{E}_t\|e_t\|^2}{E}}\\
    \overset{(a_5)}{\leq}& \eta_L E(\frac{1}{2} + 30(A^2 +1)\eta^2_L E^2L^2)\|\nabla f(w_t)\|^2  \nonumber\\
    + & 5 \eta^3_L E L^2 \left ( R^2 \lambda^2_t \sigma_M^2  +  12 E R^2 \overline{\pi}^2 \sigma_M^2 + 6 E \sigma^2_G\right )  \nonumber\\
    - & \frac{\eta_L}{2E K^2}\mathbb{E}_t\left\| \sum_{k=1}^K\sum_{\tau = 0}^{E-1}\nabla F^k({w}^k_{t, \tau})  - {K e_t}\right\|^2 + {\frac{\eta_L \mathbb{E}_t\|e_t\|^2}{E}} 
\end{align}
where $(a_1)$ follows from that $\left\langle \x, \y \right\rangle = \frac{1}{2}[\|\x\|^2 + \|\y\|^2 - \|\x - \y\|^2]$, $(a_2)$ is due to that $\mathbb{E}\|x_1 + x_2\|^2 \leq 2\mathbb{E}[\|x_1\|^2 + \|x_2\|^2]$, $(a_3)$ is due to that $\mathbb{E}\|x_1 + ... + x_n\|^2 \leq n\mathbb{E}[\|x_1\|^2 + ... \|x_n\|^2]$, $(a_4)$ is due to Assumption \eqref{assm:smoothness} and $(a_5)$ follows from Lemma \ref{lemma1}. 

The term $A_2$ can be bounded as
\begin{align}
    A_2 =& \mathbb{E}_t[\| {\Delta_t}\|^2]   = \mathbb{E}_t[\| {\bar{\Delta}_t} +  e_t\|^2]\\
    \overset{(a_6)}{\leq}& 2 \mathbb{E}_t\|\bar{\Delta}_t\|^2 + 2 \mathbb{E}_t\|e_t\|^2  \\
    \leq & \frac{2}{K^2}\mathbb{E}_t\left[\left\|\sum_{k=1}^K\sum_{\tau=0}^{E-1} \hat{g}^k_{t,\tau}\right\|^2\right] + 2 \mathbb{E}_t\|e_t\|^2\\
    \overset{(a_7)}{\leq}& \frac{2}{K^2}\mathbb{E}_t\left[\left\|\sum_{k=1}^K\sum_{\tau=0}^{E-1} (\hat{g}^k_{t,\tau} - \nabla F^k({w}^k_{t, \tau}))\right\|^2\right] \nonumber\\
     + & \frac{2}{K^2}\mathbb{E}_t\left[\left\|\sum_{k=1}^K\sum_{\tau=0}^{E-1} \nabla F^k({w}^k_{t, \tau})\right\|^2\right]+  2  \mathbb{E}_t\|e_t\|^2\\
    \overset{(a_8)}{\leq} &\frac{4E}{K} R^2\overline{\pi}^2 \sigma_M^2 + \frac{4}{K^2}\mathbb{E}_t\left[\left\|\sum_{k=1}^K\sum_{\tau=0}^{E-1} \nabla F^k({w}^k_{t, \tau}) - {K e_t}\right\|^2\right] \nonumber\\
    + & \frac{4}{K^2}\mathbb{E}_t\left\|{K e_t}\right\|^2 +  2  \mathbb{E}_t\|e_t\|^2 \\
    \leq & \frac{4E}{K} R^2 \overline{\pi}^2 \sigma_M^2 \nonumber\\
    + & \frac{4}{K^2}\mathbb{E}_t\left[\left\|\sum_{k=1}^K\sum_{\tau=0}^{E-1} \nabla F^k({w}^k_{t, \tau}) -{K e_t}\right\|^2\right] +  6  \mathbb{E}_t\|e_t\|^2
\end{align}
where both $(a_6)$ is due to that $\mathbb{E}\|x_1 + x_2\|^2 \leq 2\mathbb{E}[\|x_1\|^2 + \|x_2\|^2]$, $(a_7)$ follows the fact that $\mathbb{E}[\|\x\|^2] = \mathbb{E}[\|\x - \mathbb{E}\x\|^2] + \|\mathbb{E}\x\|^2$, and $(a_8)$ is due to Assumption \eqref{assm:unbiased-local}

Substituting the inequalities of $A_1$ and $A_2$ into the original inequality, we have:
\begin{align}
&\mathbb{E}_t[f(w_{t+1})] \\
& \leq f(w_t) - \eta\eta_L E\|\nabla f(w_t)\|^2 \nonumber\\
& + \eta \underbrace{ \langle \nabla f(w_t), \mathbb{E}[\eta_L\Delta_t + \eta_L E\nabla f(w_t)]\rangle}_{A_1} +\frac{L}{2}\eta^2 \eta_L^2 \underbrace{\mathbb{E}_t[\|\Delta_t\|^2]}_{A_2}\\
&\leq  f(w_t) - \eta\eta_L E\|\nabla f(w_t)\|^2 \nonumber\\
&+\eta\eta_L E(\frac{1}{2} + 30(A^2 +1)\eta^2_L E^2L^2)\|\nabla f(w_t)\|^2 \nonumber\\
&+ 5\eta \eta^3_LE^2 L^2 \left ( R^2 \lambda^2_t \sigma_M^2  +  12 E R^2 \overline{\pi}^2 \sigma_M^2 + 6 E \sigma^2_G\right )  \nonumber\\
&- \frac{\eta\eta_L}{2E K^2}\mathbb{E}_t\left\| \sum_{k=1}^K\sum_{\tau = 0}^{E-1}\nabla F^k({w}^k_{t, \tau})  -{K e_t}\right\|^2 \nonumber\\
& + {\frac{\eta\eta_L \mathbb{E}_t\|e_t\|^2}{E}} + \frac{2 E L\eta^2\eta^2_L}{K} R^2 \overline{\pi}^2 \sigma_M^2\nonumber\\
&+ \frac{2L\eta^2\eta^2_L}{K^2}\mathbb{E}_t\left[\left\|\sum_{k=1}^K\sum_{\tau=0}^{E-1} \nabla F^k({w}^k_{t, \tau}) - {K e_t}\right\|^2\right] +  3\eta^2\eta^2_L L \mathbb{E}_t\|e_t\|^2\\
&= f(w_t) - \eta\eta_L E(\frac{1}{2} - 30(A^2 +1)\eta^2_L E^2L^2)\|\nabla f(w_t)\|^2  \nonumber\\
&+ 5\eta \eta^3_LE^2 L^2 R^2 \lambda^2_t \sigma_M^2 + 60\eta \eta^3_LE^3 L^2 R^2 \overline{\pi}^2 \sigma_M^2 +  30\eta \eta^3_L E^3 L^2 \sigma^2_G \nonumber\\
&+ {\left(\frac{\eta\eta_L}{E} + 3\eta^2 \eta^2_L L\right)} \mathbb{E}_t \left \|\frac{1}{K} \sum_{k=1}^K\sum_{\tau = 0}^{E-1}\left ( \hat{g}_{t, \tau}^k - g_{t,\tau}^k \right )\right \|^2 \nonumber\\
&- \left(\frac{\eta\eta_L}{2E K^2} - \frac{2L\eta^2\eta^2_L}{K^2}\right) \mathbb{E}_t\left\| \sum_{k=1}^K\sum_{\tau = 0}^{E-1}\nabla F^k({w}^k_{t, \tau})  - {K e_t}\right\|^2 \\
&\overset{(a_9)}{\leq} f(w_t) - c\eta\eta_L E\|\nabla f(w_t)\|^2+ 60\eta \eta^3_LE^3 L^2 R^2 \overline{\pi}^2\sigma_M^2 \nonumber \\
& +  30\eta \eta^3_L E^3 L^2 \sigma^2_G + \left (5\eta \eta^3_LE^2 L^2 +  \eta \eta_L E + 3 \eta^2 \eta_L^2 L E^2 \right ) R^2 \lambda^2_t \sigma_M^2
\end{align}
where $(a_9)$ follows from $\left(\frac{\eta\eta_L}{2E K^2} - \frac{2L \eta^2\eta^2_L}{K^2}\right) > 0$ if $\eta\eta_L \leq \frac{1}{4E L}$, and that there exits a constant $c > 0$ satisfying $(\frac{1}{2} - 30(A^2 +1)\eta^2_L E^2L^2) > c > 0$ if $\eta_L < \frac{1}{\sqrt{60 (A^2+1) }EL}$. 

Rearranging and summing from $t = 0, ..., T -1$, we have:
\begin{align}
    &\sum_{t=0}^{T-1} c\eta\eta_L E\mathbb{E}\|\nabla f(w_t)\|^2 \\
    \leq& f(w_0) - f(w_T) + E\eta\eta_L\sum_{t=0}^{T-1}60 \eta^2_L E^2 L^2 R^2 \overline{\pi}^2 \sigma_M^2 \nonumber \\
    +& E\eta\eta_L\sum_{t=0}^{T-1} 30 \eta^2_L E^2 L^2 \sigma^2_G \nonumber\\
    +& E\eta\eta_L\sum_{t=0}^{T-1}  \left (5 \eta^2_L E L^2 + 3 \eta \eta_L L E + 1  \right ) R^2 \lambda^2_t \sigma_M^2
\end{align}
which implies,
\begin{align}
    \min_{t = 0,..., T-1}\mathbb{E}\|\nabla f(w_t)\|^2 \leq \frac{f_0 - f_*}{c\eta\eta_L E T} + \Phi_G + \Phi_M + \Phi_L
\end{align}
where 
\begin{align}
    &\Phi_G = \frac{30 E^2 \eta_L^2 L^2}{c}\sigma_G^2\\
    &\Phi_M = \frac{60 \eta^2_L E^2 L^2 R^2 \overline{\pi}^2 }{c}\sigma^2_M \\
    &\Phi_L = \frac{\left (5 \eta^2_L E L^2 + 3 \eta \eta_L L  E + 1\right ) R^2 \sigma_M^2}{c T} \sum_{t=0}^{T-1}\lambda^2_t
\end{align}
This completes the proof.
% that's all folks
\end{document}